\newtheorem{theorem}{Theorem}
\newtheorem{lemma}{Lemma}
\newtheorem{proposition}{Proposition}
\newtheorem{assumption}{Assumption}
\title{From Generalization Analysis to Optimization Designs for State Space Models}
\author
{
Fusheng Liu\\
National University of Singapore\\
{\tt fusheng@u.nus.edu} 
\and
Qianxiao Li\\
National University of Singapore\\
{\tt qianxiao@nus.edu.sg}
}
\date{}
\begin{document}
\maketitle

\begin{abstract}
A State Space Model (SSM) is a foundation model in time series analysis, which has recently been shown as an alternative to transformers in sequence modeling.  
In this paper, we theoretically study the generalization of SSMs and  
propose improvements to training algorithms based on the
generalization results.
Specifically, we give a \textit{data-dependent} generalization bound for SSMs,
showing an interplay between the SSM parameters and the
temporal dependencies of the training sequences.
Leveraging the generalization bound,
we 
(1) set up a scaling rule for model initialization based on the proposed generalization measure,
which significantly improves the robustness of the output value scales on SSMs
to different temporal patterns in the sequence data;
(2) introduce a new regularization method for training SSMs to enhance the generalization performance.
Numerical results are conducted to validate our results.
\end{abstract}

\section{Introduction}

Sequence modeling has been a long-standing research topic in many machine learning areas, such as speech recognition \citep{hinton2012deep}, 
time series prediction \citep{li2019enhancing}, and natural language processing \citep{devlin-etal-2019-bert}.
Various machine learning models have been successfully applied in sequence modeling to handle different types of sequence data, ranging from the (probabilistic) Hidden Markov model \citep{baum1966statistical} to deep learning models, e.g., 
Recurrent Neural Networks (RNNs),
Long Short-Term Memory units \citep{hochreiter1997long}, 
Gated Recurrent Unit \citep{chung2014empirical},
and transformers \citep{vaswani2017attention}.
In this paper, we focus on the state space model (SSM), which has a simple mathematical expression: $h'(t) = Ah(t) + Bx(t), y(t) = Ch(t) + Dx(t)$
where $h(t)$ is the hidden state, $x(t)$ is the input sequence, $y(t)$ is the output sequence and $A, B, C, D$ are trainable parameters. 
To simplify the analysis, we omit the skip connection by letting $D = 0$. 
In fact, our analysis can also applied to the case when $D$ is included (see the discussions in Section \ref{section: initialization scheme}).
Recent studies have demonstrated the power of SSMs in deep learning.
For example, it was shown in \citet{gu2022efficiently} that by a new parameterization and a carefully chosen initialization, the structured state space sequence (S4) model achieved strong empirical results on image and language tasks.
Following the S4 model, more variants of SSMs are proposed, e.g., 
diagonal SSMs \citep{gu2022s4d, gupta2022diagonal}
S5 \citep{smith2023simplified}, 
H3 \citep{fu2023hungry},
GSS \citep{mehta2023long},
Hyena Hierarchy \citep{poli2023hyena},
and Mamba \citep{gu2023mamba}.

Theoretical analysis and understanding of the approximation and optimization of SSMs are well studied in the literature such as \citep{li2021on, li2022approximation, gu2022efficiently, gu2023how}.
Since the SSM can be regarded as a continuous linear RNN model \citep{li2022approximation}, most generalization analysis of SSMs is based on the generalization theory of RNNs \citep{zhang2018stabilizing, chen2019generalization, tu2019understanding}.
However, these previous works did not study the effects of the temporal dependencies in the sequence data on the SSM generalization (see more details on the comparison in Section \ref{section: generalization bound}).
As an attempt to understand the relationship between the temporal dependencies and the generalization performance, this paper
aims to provide a generalization bound that connects the memory structure of the model with the temporal structure of the data.
We can, in turn, use the proposed bound to guide us in designing new algorithms to improve optimization and generalization.
Specifically, we discover two roles for the proposed generalization measure: 
(1) generalization bound as an \textit{initialization scheme};
(2) generalization bound as a \textit{regularization method}.
The common initialization method for the S4 model and its variants follows from the HiPPO framework \citep{gu2022efficiently, gu2023how}, which is based on the prerequisite that the training sequence data is stable.
To improve the robustness of the output value scales on SSMs
to different temporal patterns in the sequence data, we consider to rescale the initialization of SSMs with respect to the generalization measure.
This new initialization scheme makes the SSMs more resilient
on their initial output value scales
to variations in the temporal patterns of the training data.
Except for the initialization setup, our generalization bound can also be served as a regularizer.
Regularization methods like weight decay and dropout are widely applied to training SSMs, but the hidden state matrix $A$ is not regularized because its imaginary part controls the oscillating frequencies of the basis function $e^{A t} B$ \citep{gu2022s4d}.
By taking into account the 
interaction between the SSM structure and the temporal dependencies, we introduce a new regularization method based on our bound, and it can be applied to the hidden state space to improve the generalization performance.
Combining the initialization scheme and the regularization method, our method is applicable to various tasks, ranging from image classification to language processing, while only introducing a minimal computational overhead.
To summarize, our contributions are as follows: 
\begin{itemize}
    \item We provide a data-dependent generalization bound for SSMs by taking into account the temporal structure.
    Specifically, the generalization bound
    correlates with the memory structure of the model and the (auto)covariance process of the data.
    It indicates that 
    instead of the weight or the data norm,
    it is the interplay between the memory structure and the temporal structure of  the sequence data that influences the generalization.
    \item 
    Based on the proposed generalization bound, we 
    setup an initialization scaling rule by adjusting the 
    magnitude of the model parameters with respect to the generalization measure at initialization.
    This scaling rule 
    improves the robustness
    of the initial output value scales on SSMs
    across different temporal patterns of the sequence data.
    \item 
    Apart from the initialization scheme, we design a new regularizer for SSMs.
    Unlike weight decay, our regularizer does not penalize the parameter norm but encourages the model to find a minimizer with lower generalization bound to improve the generalization performance.
\end{itemize}

\section{Related Works}

Since a SSM is also a continuous linear RNN, there are three lines of related work: 
generalization of RNNs,
temporal structure analysis on RNNs,
and optimization of SSMs.

\textbf{Generalization of RNNs.}
Existing works on the generalization of RNNs focus on the generalization error bound analysis.
Specifically, in the early two works of \citet{dasgupta1995sample} and \citet{koiran1998vapnik}, 
VC dimension-based generalization bounds were provided to show the learnability of RNNs. 
In recent studies, \citet{zhang2018stabilizing,
chen2019generalization, tu2019understanding} 
proved norm-based generalization bounds, improving the VC dimension-based bounds by the Rademacher complexity technique \citep{bartlett2002rademacher} under the uniform-convergence framework.
In the overparameterization settings, it was shown in \citet{allen2019can} that RNNs can learn some concept class in polynomial time given that the model size is large enough.
These generalization bounds, however, do not take into account the temporal dependencies and their effects on generalization.
In this work, we provide a new generalization bound by combining the memory structure of the model and the temporal structure of the data.

\textbf{Temporal structure analysis on RNNs.}
Sequence data has long-range temporal dependencies across the time domain, which notably set it apart from non-sequence data.
Recent studies have studied the effects of such temporal dependencies on the approximation and optimization of RNNs.
For example, in the two works of \citet{li2021on, li2022approximation}, a ``curse of memory'' phenomenon was discovered when using linear RNNs to model the  temporal input-output relationships.
Particularly, when the target relationship between the input and output has a long-term memory, then both approximation and optimization become  extremely challenging. 
In \citet{wang2023inverse}, the ``curse of memory'' phenomenon on approximation and optimization was extended to non-linear RNNs 
based on the temporal relationships.
In this paper, we conduct a fine-grained analysis on the effects of the temporal structure analysis on the \textit{generalization} of RNNs.

\textbf{Optimization of SSMs.} 
RNN optimization is known for two issues: training stability and computational cost \citep{bengio1994learning, pascanu2013difficulty}.
To address these issues and capture the long dependencies efficiently in sequence modeling, the S4 model was proposed by new paraemterization, initialization and discretization \citep{gu2022efficiently}. 
Recent variants for the S4 model simplified the hidden state matrix by a diagonal matrix to enhance computational efficiency \citep{gu2022s4d, gupta2022diagonal, smith2023simplified, orvieto2023resurrecting}.
Regularization methods are also applied for SSMs to prevent overfitting, such as dropout, weight decay and the data continuity regularizer \citep{qu2023data}.
However, the principled way to regularize and initialize the parameters still remains to be explored.
In this study, we design a new regularization and initialization scheme to improve both optimization and generalization.

\section{Preliminaries}

In this section, we briefly introduce the SSM in Section \ref{section: introduction} and the motivation for optimization designs based on the generalization analysis in Section \ref{section: motivation}.

\subsection{Introduction to SSMs}\label{section: introduction}

In this paper, we consider the following single-input single-output SSM, 
\begin{equation}\label{eq: def SSM}
    h'(t) = Ah(t) + Bx(t), \quad y(t) = C h(t), \quad
    t \geq 0
\end{equation}
where $x$ is the input from an input space\footnote{A linear space of continuous functions from $\mathbb{R}_{\geq 0}$ to $\mathbb{R}$ that vanishes at
infinity.} $\mathcal{X} := C_0(\mathbb{R}_{\geq 0}, \mathbb{R})$;
$y(t) \in \mathbb{R}$ is the output at time $t$;
$h(t) \in \mathbb{R}^m$ is the hidden state with $h(0) = 0$;
$A \in \mathbb{R}^{m \times m}, B \in \mathbb{R}^{m \times 1}, C \in \mathbb{R}^{1 \times m}$ are trainable parameters.
Then (\ref{eq: def SSM}) has an explicit solution $y(t) = \int_0^t \rho_\theta(s) x(t-s) d s$,
where $\rho_\theta(s) := C e^{A s} B$ with $\theta = (C, A, B)$.
The function $\rho_\theta(s)$ captures the memory structure of the model and the temporal input-output relationship \citep{li2022approximation}.
For the S4 model and its variants \citep{gu2022efficiently, gu2022s4d, gupta2022diagonal, gu2023how},
(\ref{eq: def SSM}) is usually discretized by the Zero-Order Hold method, i.e.,
given a timescale $\Delta \in \mathbb{R}$,  $h_{k+1} = \Bar{A} h_k + \Bar{B} x_k, \quad 
    y_k = \Bar{C} h_k, \quad
    k = 0,1,\ldots,$
where $\Bar{A} = e^{\Delta \cdot A}, \Bar{B} = (\Bar{A} - \mathbb{I}_m) A^{-1} B, \Bar{C} = C$.
Then, $y_k = \Bar{C} \Bar{A}^k \Bar{B} x_0 + \Bar{C} \Bar{A}^{k-1} \Bar{B} x_1 + \ldots + \Bar{C} \Bar{B} x_k = [\Bar{K} * x]_k$ where $\Bar{K} = (\Bar{C} \Bar{B}, \Bar{C} \Bar{A} \Bar{B}, \ldots, \Bar{C} \Bar{A}^{k} \Bar{B})$ and $*$ represents to convolution.

\subsection{Motivation: a linear regression model}\label{section: motivation}

In this subsection, we use a linear regression model on non-sequential data as an example to illustrate the connection between the generalization analysis and the optimization designs.
This example then motivates us to extend the connection to SSMs on sequential data.

\textbf{Linear regression.}\ 
We consider a simple linear model $y = \theta^\top x$ with input $x \in \mathbb{R}^d$, output $y \in \mathbb{R}$ and parameter $\theta \in \mathbb{R}^d$.
Let the training data $\{(x_i, y_i)\}_{i=1}^n$ be i.i.d. sampled from a distribution $\mathcal{D}$ such that $\|x_i\|_2 = r, |y_i| \leq 1 (\forall i \in [1:n])$.
Define the empirical risk $\mathcal{L}_n(\theta) := \frac{1}{n} \sum_{i=1}^n (\theta^\top x_i - y_i)^2$ and the population risk $\mathcal{L}_\mathcal{D} (\theta) := \mathbb{E}_{x,y}[(\theta^\top x - y)^2]$.
Then given a norm-constrained space $\Theta := \{\theta \in \mathbb{R}^d: \|\theta\|_2 \leq R\}$, with probability at least $1-\delta$ over $\mathcal{D}$, 
\begin{equation}\label{eq: bound for linear regression}
    \sup_{\theta \in \Theta} |\mathcal{L}_n(\theta) - \mathcal{L}_\mathcal{D} (\theta)| \leq (r R + 1)^2 \cdot \mathcal{O}
    (\sqrt{\log(1/\delta)/n}).
\end{equation}
This is a well-known norm-based generalization bound  based on the Rademacher theory \citep{Mohri}, and we provide a proof in Appendix \ref{appendix, proof motivation} for completeness. 
Notice that the key term $r^2 R^2$ in the generalization bound (\ref{eq: bound for linear regression}) is also an upper bound for the magnitude of the linear model output, i.e., $\sup_{\theta \in \Theta} (\theta^\top x_i)^2 \leq r^2 R^2$.
Thus, we connect the model stability with the generalization bound stability, and this connection induces an initialization scheme for the initialization $\theta^{(0)}$ by 
setting $\|\theta^{(0)}\|_2 \sim \mathcal{O}(1/r)$.
In particular, if we normalize each input $x_i$ such that $r$ is also $\mathcal{O}(1)$, then $\|\theta^{(0)}\|_2 \sim \mathcal{O}(1)$.
Since $\theta^{(0)} \in \mathbb{R}^d$, one possible initialization scheme is that $\theta^{(0)}$ follows a Uniform distribution $U[-1/\sqrt{d}, 1/\sqrt{d}]$, 
which corresponds to the Kaiming initialization (up to some constant) \citep{he2015delving}.
When treating the term $r^2 R^2$ as a regularizer to improve the generalization, we get the weight decay method, i.e., the $\ell_2$ regularization w.r.t. $\|\theta\|_2^2$.
We summarize the above logic chain that connects the generalization analysis with optimization designs in Figure \ref{fig: logic diagram}.
\begin{figure}
    \centering
    \begin{tikzcd}
    \label{logic chain}
    & & \text{Initialization scheme: set $\theta_0$ s.t. $\text{Complexity}(\theta_0) = \mathcal{O}(1)$} \\
     & 
     \begin{tabular}{c}
          \text{Generalization Estimate}  \\
          $\text{GenError}(\theta) \sim \frac{\text{Complexity}(\theta)}{\sqrt{n}}$ 
     \end{tabular}
     \arrow[ru, Rightarrow] \arrow[rd, Rightarrow] & \\
    & & \text{Regularization method: penalize $\text{Complexity}(\theta)$}
\end{tikzcd}
\caption{The logic diagram goes from generalization analysis to optimization designs.}
\label{fig: logic diagram}
\end{figure}
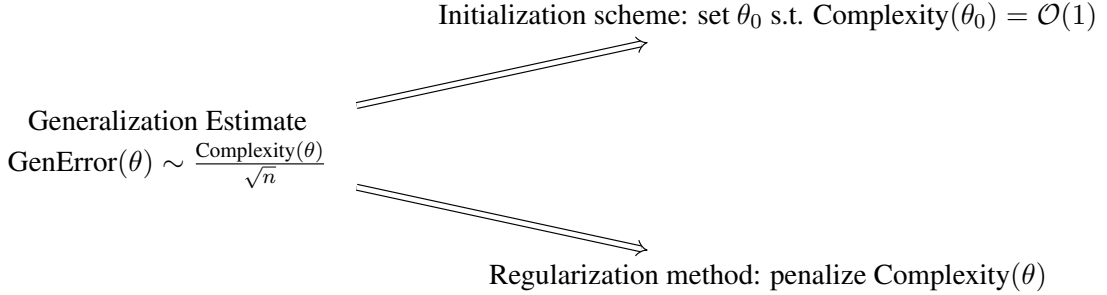
Now for SSMs, we extend the generalization analysis from non-sequential data to sequential data by taking into account the temporal structure of the data.
This linear regression example motivates us to apply the same logic diagram (Figure \ref{fig: logic diagram}) to the SSMs, and this is exactly what we are going to present in the following part of this paper.

\section{Main results}

In this section, we first  give a generalization bound for SSMs in Section \ref{section: generalization bound}, then we design a new initialization scheme in Section \ref{section: initialization scheme} based on this proposed bound.
Apart from the initialization scheme, we introduce a new regularization method in Section \ref{section: regulaization method}.
Finally, we conduct experiments to test the initialization scheme and the regularization method in Section \ref{section: experiment}.

\subsection{A generalization bound of SSMs}\label{section: generalization bound}

In this section, we present a generalization bound for the SSM (\ref{eq: def SSM}) and reveal the effects of the temporal dependencies on the generalization performance.
We show that our bound gives a tighter estimate compared with previous norm-based bounds through a toy example.
Following the same notation in Section \ref{section: introduction}, we define the empirical risk ${R}_n(\theta)$ and the population risk $R_{x}(\theta)$ as 
\begin{equation*}
    {R}_n(\theta) := 
        \frac{1}{n} \sum_{i=1}^n
    \left|\int_0^T  {\rho}_\theta (T-s)x_i(s) d s - y_i\right|^2, \quad
    R_{x}(\theta)  := \mathbb{E}_{x} \left|\int_0^T  {\rho}_\theta (T-s) x(s) d s - y\right|^2
\end{equation*}
where $T>0$ is some finite terminal time, the training sequence data $\{x_i(t)\}_{i=1}^n$ are independently sampled from a 
stochastic process with mean $\mathbb{E}[x(t)] := \mu(t)$ and covariance $\mathbb{E}[(x(s) - \mu(s))(x(t) - \mu(t))] := K(s, t)$,
and the label $y$ is generated by some underlying functional $H_T : \mathcal{X} \xrightarrow{} \mathbb{R}$, i.e., $y = H_T(x)$.
We assume that $|y| \leq 1$ for any $x \in \mathcal{X}$, otherwise, we truncate the value of the label to $1$.
In the next, we make an assumption on the
normalized process $\Tilde{x}(t) := 
(x(t)-\mu(t)) / \sqrt{K(t,t)}$:
\begin{assumption}\label{assumption: as finite}
    The normalized process $\Tilde{x}(t)$ is (1): almost surely Hölder continuous, i.e., 
    $\exists L, H > 0, s.t. \forall s, t \in [0, T], |\Tilde{x}(s) - \Tilde{x}(t)| \leq L |s-t|^{H} a.s.$;
    (2): is $\sigma^2$-sub-Gaussian for every $t \in [0, T]$, i.e., 
    $\exists \sigma > 0, s.t. \forall u > 0, P \left(|\Tilde{x}(t)| \geq  u\right) \leq 2\exp (-u^2/2\sigma^2)$ for any $t \in [0, T]$.
\end{assumption}
We leave the discussion of the assumption after the statement of the main theorem.
Now we proceed to bound generalization gap $|R_x(\theta) -  {R}_n(\theta)|$ by establishing uniform convergence of the empirical risk to its corresponding population risk, as stated in following theorem:
\begin{theorem}\label{thm: p=1, q=inf}
    For a SSM $\int_0^T  {\rho}_\theta(T-s) x(s) d s$, following notations and settings in Section \ref{section: introduction} \& \ref{section: generalization bound},
    we define $\psi(\Theta) := \sup_{\theta \in \Theta} \int_0^T \left| {\rho}_\theta(T-s)\right|
    \sqrt{K(s, s)}
    d s 
    + 
    \sup_{\theta \in \Theta} \left|\int_0^T  {\rho}_\theta(T-s) \mu(s)
    d s\right|$.
    Then under Assumption 
    \ref{assumption: as finite},
    given a parameter space $\Theta$ for $\theta$, for any $\delta \in (0, 1)$, with probability at least $1-\delta$ over the training sequences,
\begin{equation}\label{thm: bound}
    \sup_{\theta \in \Theta} \left|R_x(\theta) -  {R}_n(\theta)\right| 
        \leq 
    (\psi(\Theta)+1)^2
    \cdot 
    \mathcal{O} (\log^{3/2}(Tn/\delta)/{\sqrt{n}}).
\end{equation}
\end{theorem}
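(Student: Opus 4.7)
The plan is to adapt the Rademacher-complexity chain used in Section~\ref{section: motivation} for the linear-regression example to the stochastic-process setting. The key new difficulty is that the input $x$ is a random function on $[0,T]$ rather than a bounded vector, so I would first establish a high-probability ``good event'' on which the normalized process $\tilde{x}$ is uniformly controlled over $s$ and over all $n$ samples, and only then invoke symmetrization and Talagrand's contraction lemma conditionally on this event. All of the logarithmic factors in the final bound should originate from this initial uniform control.

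Using Assumption~\ref{assumption: as finite}, I construct an $\epsilon$-net of $[0,T]$ whose cardinality $N$ is polynomial in $n$ and $T$, apply a union bound over the $Nn$ pairs (net point, sample index) via the sub-Gaussian tail, and control intra-cell fluctuations through the Hölder bound. This yields an event $\mathcal{E}$ of probability at least $1-\delta/2$ on which $M := \max_{i\leq n}\sup_{s\in[0,T]}|\tilde{x}_i(s)| \leq C\sqrt{\log(Tn/\delta)}$. Decomposing $x_i(s) = \mu(s) + \sqrt{K(s,s)}\,\tilde{x}_i(s)$ and applying the triangle inequality then gives, on $\mathcal{E}$, the uniform prediction bound $|\int_0^T \rho_\theta(T-s)x_i(s)\,ds| \leq \psi(\Theta)(1+M)$. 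Combined with $|y_i|\leq 1$, the squared loss is bounded by $B = O((\psi(\Theta)+1)^2\log(Tn/\delta))$ and is $L$-Lipschitz in its first argument with $L = O((\psi(\Theta)+1)\sqrt{\log(Tn/\delta)})$.

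Conditioning on $\mathcal{E}$, McDiarmid's inequality combined with the standard symmetrization trick gives $\sup_\theta|R_n(\theta)-R_x(\theta)| \leq 2\mathfrak{R}_n(\ell\circ\mathcal{F}) + B\sqrt{\log(2/\delta)/n}$, and Talagrand's contraction lemma reduces $\mathfrak{R}_n(\ell\circ\mathcal{F})$ to $L\cdot\mathfrak{R}_n(\mathcal{F})$. For $\mathfrak{R}_n(\mathcal{F})$ I again split $x_i = \mu + \sqrt{K(\cdot,\cdot)}\,\tilde{x}_i$: the mean part contributes $O(\psi(\Theta)/\sqrt{n})$ from $\mathbb{E}|\tfrac{1}{n}\sum_i\sigma_i|\leq 1/\sqrt{n}$, while the stochastic part is handled by pulling the supremum inside the integral,
\[
\mathbb{E}_\sigma\sup_\theta \frac{1}{n}\sum_i\sigma_i\int_0^T\rho_\theta(T-s)\sqrt{K(s,s)}\,\tilde{x}_i(s)\,ds \leq \psi(\Theta)\cdot\mathbb{E}_\sigma\sup_{s\in[0,T]}\left|\tfrac{1}{n}\sum_i\sigma_i\tilde{x}_i(s)\right|,
\]
and the remaining Rademacher-process supremum is controlled by the same Hölder/sub-Gaussian covering as in the previous paragraph, giving $O(\sqrt{\log(Tn)/n})$. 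Multiplying $L\cdot\mathfrak{R}_n(\mathcal{F})$ and adding the $B\sqrt{\log(1/\delta)/n}$ residual produces the claimed $(\psi(\Theta)+1)^2\log^{3/2}(Tn/\delta)/\sqrt{n}$ rate.

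The hard part will be calibrating the chaining step: the $\epsilon$-net scale must be chosen so that the Hölder-continuity fluctuation term and the union-bounded sub-Gaussian tails balance precisely, delivering the $\log^{3/2}$ exponent in the theorem statement — too coarse a net loses pointwise concentration, while too fine a net inflates the union bound. A secondary subtlety is that the Rademacher bound for $\mathcal{F}$ must be expressed via the \emph{weighted} $L^1$ quantity $\int|\rho_\theta(T-s)|\sqrt{K(s,s)}\,ds$, so that only $\psi(\Theta)$, rather than a cruder unweighted norm of $\rho_\theta$, appears in the final estimate.
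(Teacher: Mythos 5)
Your proposal follows essentially the same route as the paper's proof: symmetrization plus Talagrand's contraction, the decomposition of $x_i$ into mean and normalized-fluctuation parts, the H\"older inequality with $p=1,q=\infty$ to extract the weighted quantity $\psi(\Theta)$, and an $\varepsilon$-net with sub-Gaussian union bound and Massart's lemma to control both $\sup_{i,t}|\tilde{x}_i(t)|$ and the Rademacher process supremum. The only (inconsequential) discrepancy is an intermediate bookkeeping slip — the Massart step yields $O(\log(Tn/\delta)/\sqrt{n})$ rather than $O(\sqrt{\log(Tn)/n})$ for the Rademacher-process supremum, since the radius scales with $M\sim\sqrt{\log(Tn/\delta)}$ — but the corrected accounting still delivers the stated $\log^{3/2}$ rate.
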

Where $\mathcal{O}$ hides a constant that depends on $\sigma, L, H$.
The proof is given in Appendix \ref{section: proof for thm1}. 
We see that this bound decreases to zero as the sample size $n \xrightarrow{} \infty$, provided that the terminal time $T$ is finite and $\psi_{\Theta}$ grows slower than $\sqrt{n}$.
For example, when the data statistics (e.g., $\mu(s)$ and $K(s,s)$) are uniformly bounded along the time horizon, by the exponentially decay property of the SSM function $\rho_\theta(s)$, we have $\psi_\Theta$ is finite, then the generalization bound is $\Tilde{\mathcal{O}}(1/\sqrt{n})$,
yielding that the mean and variance at each length position together play important roles in generalization analysis.

\textbf{Proof sketch.}\ 
The proof is based on Rademacher theory \citep{bartlett2002rademacher}.
The main difficulty is to bound the Rademacher complexity of the SSM function $\int_0^T {\rho}_\theta(T-s) x(s) d s$ for a stochastic process $x(s)$.
We first use the Hölder inequality to get an upper bound for the Rademacher complexity w.r.t. the normalized process $\Tilde{x}(s)$, then combining Hölder continuity and the heavy-tail property in Assumption \ref{assumption: as finite}, we show the finiteness of $\sup_{s \in [0, T]}\Tilde{x}(s)$. 
Finally we use an $\varepsilon$-net argument to give an explicit bound for the Rademacher complexity, which then finishes the proof.

\textbf{Discussions of Assumption \ref{assumption: as finite}.}\
This assumption contains two parts.
Hölder continuity is used to 
bound $\sup_{s \in [0, T]}\Tilde{x}(s)$ and the Rademacher complexity of the SSM function class.
By the Kolmogorov continuity theorem \citep{stroock1997multidimensional}, Hölder continuity covers a wide range of random process that satisfies certain inequalities for its moments.
For the sub-Gaussian property, it ensures $\Tilde{x}(s)$ is bounded in a finite time set with high probability.
Sub-Gaussian random variables include Gaussian and any bounded variables.
Specifically, for image classification tasks with flattened image pixels, if the range of the pixel values is a finite class (e.g., integer numbers from 0 to 255), then the Hölder continuity condition can be dropped. 
We leave more detailed discussions and provide some concrete examples that satisfy Assumption \ref{assumption: as finite} in Appendix \ref{section: discuss assumption 1}.

\textbf{Comparison to previous bounds.}\ 
Since a SSM is also a continuous linear RNN model, we compare (\ref{thm: bound}) with previous bounds for linear RNNs.
A generalization bound $\widetilde{\mathcal{O}} \left(\|x\|_2  \|B\|_2  \|C\|_2 \|A\|_2 / {\sqrt{n}}\right)$ is provided In \citet{chen2019generalization}, where $\|x\|_2$ is the 2-norm of the discrete input sequence.
In the continuous case, $\|x\|_2$ corresponds to the $L^2$ norm w.r.t. a Dirac measure.
By changing the matrix 2-norm to matrix 1-norm, \citet{tu2019understanding} shows another similar generalization bound. 
These bounds separate the data complexity and the model complexity by the data norm and the model parameter norm individually, and do not account for the temporal dependencies across the time domain.
In this work, instead, we incorporate the temporal dependencies via the sequence statistics (mean and variance) to get a generalization bound.
Next, we use a toy example to illustrate that our bound gives a tighter estimation.
Given a stochastic process $\{x(t)\}_{t \in [0, T]}$ with mean $\mu(t)$ and covariance $K(s,t)$, we consider the following two upscale transformations (by increasing $T$ to $2T$):
\begin{enumerate}
    \item 
    left zero padding: 
    $x_1(t)  = 0, \ t \in [0, T);
    \quad
    x_1(t)  = x(t-T), \ t \in [T, 2T]$
    \item 
    right zero padding: 
    $x_2(t)  = x(t), \ t \in [0, T]; 
    \quad
    x_2(t)  = 0, \ t \in (T, 2T]$
\end{enumerate}
Then the two SSM outputs are given by $y_i(2T) = \int_0^{2T}  {\rho}_\theta(2T-s) x_i(s) ds$ for $i=1,2$.
Hence,
\begin{equation*}
    y_1(2T) = C
     \int_0^{T} e^{A(T-s)}B x(s) ds, 
    \quad
    y_2(2T) = C e^{AT} 
    \int_0^{T} e^{A(T-s)}B x(s) ds.
\end{equation*}
We see that the magnitude of $y_1(2T)$ and $y_2(2T)$ differs with an exponential factor $e^{AT}$.
Since all the eigenvalues of $A$ have negative real part, $y_2(2T) \xrightarrow{} 0$ as $T$ increases.
Hence, the right zero padding transformation degenerates the SSM function class to a zero function class for large $T$, inducing a \textit{minimal} generalization gap that only contains the statistical sampling error (see (\ref{thm: bound}) by letting $K(s,s) = \mu(s) = 0$).
Therefore, a desired generalization bound should reflect such a difference caused by the different temporal dependencies.
However, previous norm-based generalization bounds do not capture such a difference for these two transformations as they 
produce the same $L^2$ norm for the input sequence.
Let us see what happens for our proposed generalization measure.
For the left zero padding, the key term in (\ref{thm: bound}) becomes 
\begin{equation}\label{measure for left zero padding}
     \int_0^{T} \left|C e^{A(T-s)} B\right| \sqrt{K(s, s)} d s +
    \left|\int_0^{T} C e^{A(T-s)} B \mu(s)
    d s\right|+1
\end{equation}
For the right zero padding, the key term in (\ref{thm: bound}) becomes 
\begin{equation}\label{measure for right zero padding}
    \int_0^{T} \left|C e^{AT} e^{A(T-s)}B\right| \sqrt{K(s, s)} d s +
    \left|\int_0^{T} C e^{AT} e^{A(T-s)}B \mu(s)
    d s\right|+1
\end{equation}
The detailed derivations are given in Appendix \ref{appendix, derivation for padding}.
By the same argument, our bound (\ref{thm: bound}) indeed captures the difference on the magnitude of the generalization performance for these two sequence transformations.
In particular, as $T \xrightarrow{} \infty$, 
(\ref{measure for right zero padding}) reduces to $1$, which yields a minimal generalization gap as expected for the zero function class.
In that sense, we get a tighter bound for the SSMs.

\textbf{Zero shot transferability.}\
A benefit of SSMs is the zero-shot transferability to other sampling frequencies (i.e., the timescale measure in continuous case).
For example, for a SSM function $y_T = \int_0^T  {\rho}_\theta(T-s) x(s) d s$, if we downscale the input sequence $x(s)$ by half of the sampling frequency, then the SSM output becomes $y_T = \int_0^{T/2}  {\rho}_\theta(T-2s) x(2s) ds$, which equals to $\int_0^{T} \frac{1}{2} {\rho}_\theta(T-s) x(s) ds$.
Now for a new SSM parameter $\Tilde{\theta} = (2C, A, B)$, we have $\rho_{\Tilde{\theta}}(s) = 2 \rho_\theta(s)$, indicating that by simply modifying the SSM parameters, one can transfer the model to half the sampling frequency while keeping the output invariant.
One advantage for our generalization measure is that it is also zero shot transferable.
To see this, we use the same example here.
Under the downscale sampling, both 
$\int_0^T \left| {\rho}_\theta(T-s)\right| \sqrt{K(s, s)}d s$ and $\left|\int_0^T  {\rho}_\theta(T-s) \mu(s) d s\right|$ remain invariant for the new parameter $\Tilde{\theta}$ 
because $\sqrt{K(s,s)}$ and $\mu(s)$ have the same scaling as $x(s)$.
Similarly, other sampling frequencies are also zero shot transferable for our generalization measure by simply adjusting the SSM parameters.

\subsection{Generalization bound as an initialization scheme}\label{section: initialization scheme}

In this section, we design a scaling rule for the SSM parameters at initialization based on the generalization bound (\ref{thm: bound}).
This new initialization scheme improves the robustness of the initial output value scales on SSMs across different temporal patterns of the sequence data.

Our proposed initialization scheme is built on the HiPPO based initialization \citep{gu2023how}, which is a \textit{data independent} initialization method.
Specifically, the HiPPO framework initializes the hidden state matrices $A, B$ to produce orthogonal basis functions, and the matrix $C$ to be standard normal for training stability.
However, the argument for the training stability relies on the prerequisite that the input sequence is constant along the time index (\citet[Corollary 3.4]{gu2023how}),
which has some limitations in applicability as the long-range dependencies may lead to very different temporal patterns on the input sequence.
As the dashed lines in the left and the right part of Figure \ref{fig: toy result} show, the 
SSM output value scale  and the loss value scale under the HiPPO based initialization
vary much across different temporal dependencies, making the loss values inconsistent during training.
To address this issue, we follow the 
logic diagram in Figure \ref{fig: logic diagram} by adjusting the generalization complexity to be $\mathcal{O}(1)$.
Specifically, we extract the dominant term in the generalization bound (\ref{thm: bound}):
\begin{equation}\label{eq: proposed regularizer}
    \tau(\theta) := \left(\int_0^T \left| {\rho}_\theta(T-s)\right| \sqrt{K(s, s)}d s + \left|\int_0^T  {\rho}_\theta(T-s) \mu(s)d s\right|\right)^2.
\end{equation}
Notice that ${\rho}_\theta(s) = C e^{As} B$, if we rescale $C$ to $\xi C$ for some $\xi \in \mathbb{R}$, we have $\tau(\Tilde{\theta}) = \xi^2 \cdot \tau(\theta)$ for $\Tilde{\theta} = (\xi C, A, B)$.
This induces a new initialization scheme, i.e., once the parameters $\theta = (C, A, B)$ are initialized by the HiPPO method, 
we rescale $C$ to $\Tilde{C}$ such that  
\begin{equation}\label{eq: normalized C}
    \Tilde{C} = 
    \frac{1}{\sqrt{\tau(\theta)}} \cdot C.
\end{equation}
This rescaling method guarantees the SSM output value is bounded at initialization for \textit{any} stochastic process that satisfies Assumption \ref{assumption: as finite}, ensuring 
the robustness of the initial loss value scales on SSMs across different temporal structures.
We formalize the statement in Proposition \ref{prop: initialization}.
\begin{proposition}\label{prop: initialization}
    Consider a SSM $\int_0^T  {\rho}_\theta(T-s) x(s) d s$ with $\theta = (C, A, B)$, for any random process $x(s)$ satisfies Assumption 
    \ref{assumption: as finite},
    let $\Tilde{C}$ be given by the rescaling method (\ref{eq: normalized C}), then for $\Tilde{\theta} := (\Tilde{C}, A, B)$, 
    we have $\mathbb{E}_x \left[\left|\int_0^T  {\rho}_{\Tilde{\theta}}(T-s) x(s) d s\right|\right] \leq 
     \mathcal{O} (\sqrt{\log T})$.
     Here $\mathcal{O}$ hides a constant that only depends on $\sigma$ and $L$ as described in Assumption \ref{assumption: as finite}.
\end{proposition}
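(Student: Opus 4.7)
The plan is to decompose $x(s)=\mu(s)+\sqrt{K(s,s)}\,\tilde{x}(s)$ so that the SSM output splits into a deterministic piece driven by $\mu$ and a mean-zero stochastic piece driven by the normalized process $\tilde{x}$. I would then exploit the key identity $\rho_{\tilde{\theta}}(s)=\rho_\theta(s)/\sqrt{\tau(\theta)}$ implied by the rescaling (\ref{eq: normalized C}): by the very definition of $\tau(\theta)$ in (\ref{eq: proposed regularizer}), this immediately yields both $\int_0^T|\rho_{\tilde{\theta}}(T-s)|\sqrt{K(s,s)}\,ds\le 1$ and $\left|\int_0^T\rho_{\tilde{\theta}}(T-s)\mu(s)\,ds\right|\le 1$. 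In other words, the rescaling is designed precisely so that each of the two summands appearing in $\sqrt{\tau(\tilde\theta)}$ is individually bounded by $1$.

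With the decomposition in hand, the deterministic piece contributes at most $1$ directly. For the stochastic piece I would pull the supremum of $|\tilde{x}|$ out of the integral:
\begin{equation*}
\left|\int_0^T\rho_{\tilde{\theta}}(T-s)\sqrt{K(s,s)}\,\tilde{x}(s)\,ds\right| \;\le\; \sup_{s\in[0,T]}|\tilde{x}(s)| \cdot \int_0^T|\rho_{\tilde{\theta}}(T-s)|\sqrt{K(s,s)}\,ds \;\le\; \sup_{s\in[0,T]}|\tilde{x}(s)|.
\end{equation*}
Taking expectations and applying the triangle inequality, the claim reduces to showing $\mathbb{E}\sup_{s\in[0,T]}|\tilde{x}(s)|=\mathcal{O}(\sqrt{\log T})$ with constants depending only on $\sigma$ and $L$.

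The main obstacle is this last sup-bound, which is where the $\sqrt{\log T}$ factor enters and where Assumption \ref{assumption: as finite} gets used in full. It is essentially the same $\varepsilon$-net computation that already appears inside the proof sketch of Theorem \ref{thm: p=1, q=inf}: I would discretize $[0,T]$ with a uniform grid $\{t_k\}_{k=1}^N$ of spacing $\varepsilon=T/N$, apply a union bound with the $\sigma^2$-sub-Gaussian tail to obtain $\mathbb{E}\max_k|\tilde{x}(t_k)|\lesssim \sigma\sqrt{\log N}$, and control the off-grid fluctuation by the almost-sure Hölder bound $\sup_{|s-t|\le\varepsilon}|\tilde{x}(s)-\tilde{x}(t)|\le L\varepsilon^H$. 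Choosing $N$ polynomially in $T$ (for instance $N\asymp T^{1+1/H}$) makes the Hölder residual $\le LT^{-1}$ while keeping $\sqrt{\log N}=\mathcal{O}(\sqrt{\log T})$. Combining with the deterministic $\le 1$ bound from the first paragraph then produces the desired $\mathcal{O}(\sqrt{\log T})$ estimate. Because this chaining argument is already worked out in detail for Theorem \ref{thm: p=1, q=inf}, I would cite that computation rather than redo it from scratch.
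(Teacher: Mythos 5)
Your proposal is correct and follows essentially the same route as the paper: the rescaling by $\sqrt{\tau(\theta)}$ reduces everything to bounding $\mathbb{E}\sup_{s\in[0,T]}|\tilde{x}(s)|$, which is then handled by a grid discretization, the sub-Gaussian maximal inequality over the grid, and the almost-sure H\"older bound for the off-grid fluctuation. The only cosmetic difference is your grid size $N\asymp T^{1+1/H}$ versus the paper's $N=[T]+1$ (which leaves an additive constant $L$ rather than $LT^{-1}$); both yield the stated $\mathcal{O}(\sqrt{\log T})$ with constants depending only on $\sigma$ and $L$.
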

The proof is provided in Appendix \ref{appendix: proof of proposition 1}.
Proposition \ref{prop: initialization} shows that the expected SSM output value are bounded for \textit{any} stochastic processes that satisfies Assumption \ref{assumption: as finite}, even when the input sequence is not almost surely bounded.
This improves the robustness of the output value scales on SSMs 
in the sense that the scale of the output value does not depend on the variations of the temporal structures.
It is worth noting that
different from normalization methods such as min-max normalization and standardization, our method only changes the model parameters.
This is important because normalization on data numerical values in language tasks can lead to loss of crucial information. 
For example, mathematical expressions like ``$\max(1, 9) = 9$” have a contextual meaning where normalization may result in the loss of structured information that is essential to understand.

\begin{algorithm}[tb]
\caption{Training an $\ell$-layer SSM with the scheme (\ref{eq: normalized C})}
\label{alg: normalization}
\begin{algorithmic}[1]
\Require
training sequences with length $L$,
    model dimension $d$,
    projection matrix $C_i$ of $i$-th layer, 
    number of epochs $S$
\For{$s=0$ to $S-1$}
\If{$s = 0$}
\State
Sample a minibatch sequence from the training sequences
\For{$i=1$ to $\ell$}
\State
Compute mean $\mu_i \in \mathbb{R}^{L \times d}$, variance $K_i \in \mathbb{R}^{L \times d}$ of the $i$-th layer's inputs along the batch dimension
\Comment{Inputs of the $i$-th layer depend on model parameters of the first $i-1$ layers}
\State
Calculate the SSM kernel $k_i \in \mathbb{R}^{L \times d}$ by the model parameters of the $i$-th layer
\State
$\tau_i \gets \left[|k_i| * \sqrt{K_i} + |k_i * \mu_i|\right]_L \in \mathbb{R}^{d}$
\State
Averaging over the feature dimension: 
    $\tau_i \gets \|\tau_i\|_2^2 / d$
\State
Update: $C_i \gets {C_i}/ \sqrt{\tau_i}$
\EndFor
\EndIf
\State
Regular training procedure for the updated initialization
\EndFor
\end{algorithmic}
\end{algorithm}

\textbf{Implementation for high-dimensional, multi-layer SSMs.} 
In the practical training,  
the SSMs used for tasks such as image classification or language processing are usually deep and high dimensional ($d > 1$), while our initialization scheme (\ref{eq: normalized C}) is designed based on the one-dimensional shallow SSM.
To extend to high-dimensional SSMs, we empirically treat all features to be independent and calculate $\tau(\theta)$ 
by its average along the feature dimension.
For an $\ell$-layer SSM with the initial projection matrix $C_1,\ldots,C_\ell$ at each layer, we first calculate the complexity measure $\tau_1$ for the first layer and rescale $C_1$ by $C_1/\sqrt{\tau_1}$.
Then we calculate the complexity measure $\tau_2$ for the second layer by the updated input sequence of layer 2 and rescale $C_2$ by $C_2/\sqrt{\tau_2}$. 
We repeat this process until the last layer.
We describe the complete procedures in Algorithm \ref{alg: normalization}, 
where the $|\cdot|$ and $\sqrt{\cdot}$ in Line $7$ represent to element-wise absolute value and element-wise square root respectively.
$[\cdot]_L$ extracts the last position of an element obtained from the convolution.
The extension of our theory to the multi-layer case is an interesting direction, which we leave for future work.

\textbf{Skip connections and nonlinearities.}
There are several gaps between the theory and the methodologies in this paper.
The first one that the skip connection matrix $D$ is omitted in our defined model (\ref{eq: def SSM}).
This will not affect our generalization bound because we may express the explicit solution for (\ref{eq: def SSM}) as $y(t) = \int_0^t (\rho_\theta(s) + D \delta (s)) x(t-s) d s$ where $\delta(\cdot)$ is a delta function.
In that case, the SSM is still a convolution model but with a new kernel function $\rho_\theta(s) + D \delta (s)$.
However, the initialization scheme (\ref{eq: normalized C}) only adjusts $C$ and requires the kernel function to be linear in $C$.
Hence, (\ref{eq: normalized C}) may not work well when $D x(t)$ is much larger than $\int_0^t \rho_\theta(s)x(t-s)ds$. 
However, we can still derive a proper rescaling scheme for this case.
One straightforward way is that we first calculate $\tau(\theta)$ for a given initialization, and then rescale $C, D$ as $C \sqrt{\tau(\theta)}$ and $D / \sqrt{\tau(\theta)}$ respectively.
This reinitialization method guarantees that $\tau(\theta) = 1$ after rescaling.
The second gap is that our theory is for single-layer linear SSMs.
When nonlinearities are added, our generalization bound still works for single-layer SSMs if the nonlinearity does not affect the Hölder condition and the sub-Gaussian property (Assumption \ref{assumption: as finite}). 
For Lipschitz (also Hölder continuous) nonlinearities, there are some known examples (see Appendix \ref{appendix: lip for subgaussian}) where the sub-Gaussian condition still remains after the nonlinearity.

\subsection{Generalization bound as a regularization method}\label{section: regulaization method}

\begin{algorithm}[tb]
\caption{Training an $\ell$-layer SSM with the scheme (\ref{eq: regularized risk})}
\label{alg: regularization}
    \begin{algorithmic}[1]
\Require
training sequences with length $L$,
   model dimension $d$,
    initialization $\theta_0$,
    loss function $\mathcal{L}$,
    regularization factor $\lambda$,
    optimizer $\textsf{OPT}$,
    number of epochs $S$
\For{$s=0$ to $S-1$}
\State
Sample a minibatch from the training sequences
\State
Set total complexity $\tau = 0$
\For{$i=1$ to $\ell$}
\State
Compute mean $\mu_i \in \mathbb{R}^{L \times d}$ and variance $K_i \in \mathbb{R}^{L \times d}$ for inputs of the $i$-th layer along the batch dimension
\State
Calculate the SSM kernel $k_i \in \mathbb{R}^{L \times d}$ by the model parameters of the $i$-th layer
\State
$\tau_i \gets \left[|k_i| * \sqrt{K_i} + |k_i * \mu_i|\right]_L \in \mathbb{R}^{d}$
\State
Averaging over the feature dimension: 
    $\tau_i \gets \|\tau_i\|_2^2 / d$
\State
Add the complexity of the $i$-th layer: 
    $\tau \gets \tau + \tau_i$
\EndFor
\State
 Compute the training loss: $\mathcal{L} \gets \mathcal{L} + \lambda \cdot \tau$
 \State
 Parameters update: $\theta_{i+1} \gets \textsf{OPT}(\theta_i, \mathcal{L})$
\EndFor
\Ensure Updated model parameter $\theta_s$
\end{algorithmic}
\end{algorithm}

\begin{figure*}[ht]
    \centering
    \includegraphics[width=\textwidth]{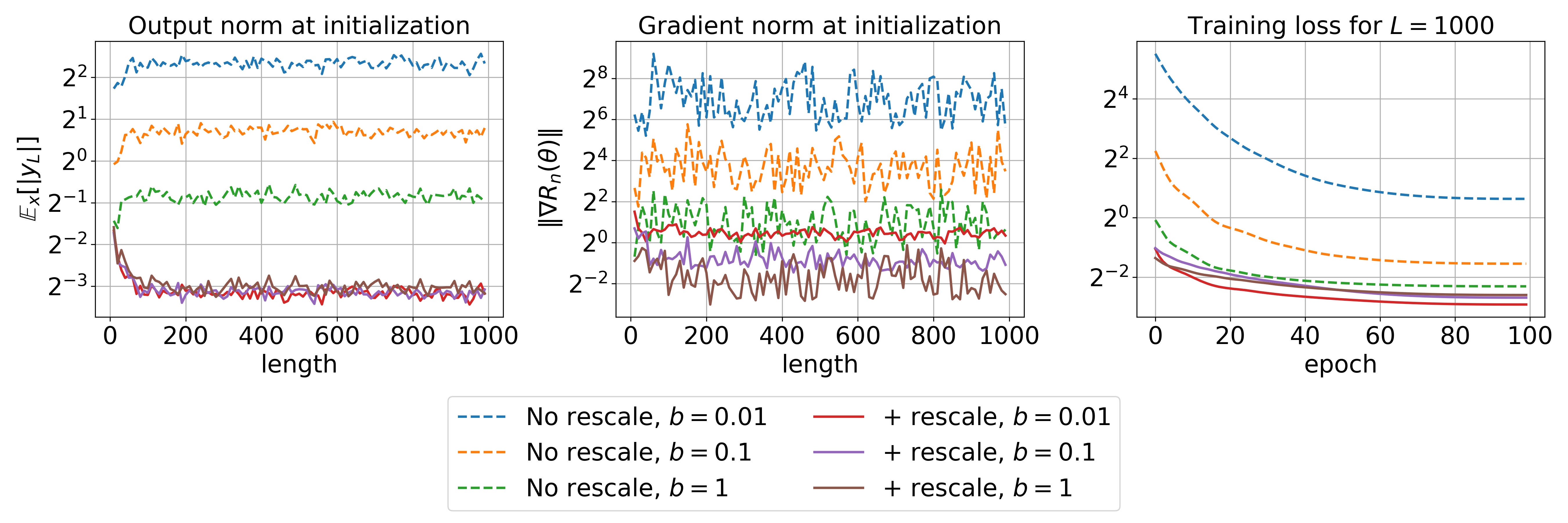}
    \caption{Effects of the initialization scheme (\ref{eq: normalized C}) on the model output scale, the gradient norm and the training loss under different temporal dependencies by varying the moment coefficient $b = 0.01, 0.1, 1$.
    (Left) The output $\mathbb{E}_x[|y_L|]$ at initialization w.r.t. the Gaussian white noise sequence $(x_1,\ldots,x_L)$ for length $L$ from $1$ to $1000$;
    (Middle) The gradient norm $\|\nabla R_n(\theta)\|$ at initialization w.r.t. the mean squared error (MSE) for varied sequence length;
    (Right) The training MSE curve for the Gaussian white noise with length $L = 1000$.}
    \label{fig: toy result}
\end{figure*}

In addition to its role as an initialization scheme, the generalization measure can also be regarded as a regularizer.
In this section, we utilize the bound (\ref{thm: bound}) to design a regularization method to improve the generalization performance, and simultaneously bring a little extra computational cost. 
For the generalization bound (\ref{thm: bound}),
we consider to use the dominant term (for large $T$) $\tau(\theta)$ defined in (\ref{eq: proposed regularizer})
as a regularizer.
Then, the new empirical risk with regularization is given by 
\begin{equation}\label{eq: regularized risk}
    \Tilde{R}_n(\theta) := 
    R_n(\theta) + 
    \lambda 
    \cdot \tau (\theta),
\end{equation}
where $\lambda \geq 0$ is the regularization coefficient.
When training multi-layer SSMs, we 
calculate the complexity $\tau(\theta)$ in (\ref{eq: regularized risk}) at each layer and add them together as a total regularization.
We describe the training procedures for one-layer SSMs in Algorithm \ref{alg: regularization}, 
where the notations follow Algorithm \ref{alg: normalization}.

\textbf{Computational cost analysis.}
From the training procedures in Algorithm \ref{alg: regularization}, we can see that the newly introduced training complexity mainly comes from the calculation for the convolution between the SSM kernel and the sequence statistics $(\mu, K)$.
Since the convolution can be conducted by the 
fast Fourier transform \citep{gu2022efficiently} with complexity $\mathcal{O}(bd L \log L)$ where $b$ is the batch size. Then the new complexity for Algorithm \ref{alg: regularization} becomes $\mathcal{O}((b+2)d L \log L)$, which is acceptable in the practical training.
We also include a concrete comparison of the running times in training real datasets to confirm this in Table \ref{table: lra normalization}.

\subsection{Experiments}\label{section: experiment}

\begin{table*}[t]
\begin{center}
\resizebox{\textwidth}{!}{%
\begin{tabular}{|c|ccc|ccc|ccc|}
\hline
& \multicolumn{3}{c|}{Training loss (MSE)} & \multicolumn{3}{c|}{Test loss (MSE)} & \multicolumn{3}{c|}{Generalization measure $\psi_{\Theta}^2/\sqrt{n}$} \\
\cline{2-10}
& $b=1$ & $b=0.1$ & $b=0.01$ & $b=1$ & $b=0.1$ & $b=0.01$ & $b=1$ & $b=0.1$ & $b=0.01$
\\
\hline
w/o (\ref{eq: normalized C}, \ref{eq: regularized risk}) & {0.15}$_{\pm 0.002}$ & ${0.67}_{\pm 0.09}$ & $2.50_{\pm 0.52}$ & $0.25_{\pm 0.01}$
& $1.01_{\pm 0.14}$ & $4.70_{\pm 0.77}$ & $0.93_{\pm 0.20}$ & $5.16_{\pm 0.84}$ & $46.23_{\pm 7.49}$
\\
\hline
w (\ref{eq: normalized C}) & $\textbf{0.11}_{\pm 0.006}$ & $\textbf{0.27}_{\pm 0.02}$ & $\textbf{0.20}_{\pm 0.01}$ & ${0.20}_{\pm 0.003}$ & $0.75_{\pm 0.05}$
& $1.06_{\pm 0.12}$ & $0.45_{\pm 0.03}$ & $2.36_{\pm 0.44}$ & $7.19_{\pm 1.60}$
\\
\hline
w (\ref{eq: regularized risk}) & $0.21_{\pm 0.008}$ & $0.97_{\pm 0.12}$ & $4.83_{\pm 0.52}$ & $0.22_{\pm 0.008}$
& $0.87_{\pm 0.07}$ & $3.59_{\pm 0.09}$ &
$0.55_{\pm 0.05}$ &
$2.76_{\pm 0.23}$ & $22.49_{\pm 0.78}$
\\
\hline
w (\ref{eq: normalized C}, \ref{eq: regularized risk}) & $0.15_{\pm 0.005}$ & $0.37_{\pm 0.04}$ & $0.35_{\pm 0.02}$ & $\textbf{0.18}_{\pm 0.004}$
& $\textbf{0.59}_{\pm 0.03}$ & $\textbf{0.60}_{\pm 0.01}$ &
$\textbf{0.23}_{\pm 0.01}$ &
$\textbf{0.46}_{\pm 0.12}$ & $\textbf{0.46}_{\pm 0.07}$
\\
\hline
\end{tabular}
}
\end{center}
\caption{Training loss, test loss and generalization measure $\psi_{\Theta}^2/\sqrt{n}$ on Gaussian white noise sequences with different coefficients $b$ after convergence.
By adding the initialization scheme (\ref{eq: normalized C}), SSMs achieve better optimization performance and are more robust on the final training loss value across different temporal dependencies. 
By adding the regularization term (\ref{eq: regularized risk}), SSMs
get better generalization performance (lower test loss).
}
\label{table: toy result}
\end{table*}

\begin{table*}[t]
\vspace{-2mm}
\begin{center}
\resizebox{\textwidth}{!}{%
\begin{tabular}{|cc|c|c|c|c|c|c|c|}
\hline
\multicolumn{2}{|c|}{}     & ListOps & Text & Retrieval & Image & Pathfinder & PathX & Average \\ 
\hline \hline
\multicolumn{1}{|c|}{\multirow{7}{*}{S4-Legs}}   & w/o (\ref{eq: normalized C}, \ref{eq: regularized risk}) &    {61.16}$_{\pm 0.32}$   &    {88.69}$_{\pm 0.07}$  &  91.21$_{\pm 0.17}$     &   87.41$_{\pm 0.14}$    &      95.89$_{\pm 0.10}$       &    96.97$_{\pm 0.31}$ & 86.89
\\ \cline{2-9} 
\multicolumn{1}{|c|}{}  & w (\ref{eq: normalized C})  &  60.79$_{\pm 0.26}$   &    88.58$_{\pm 0.21}$  &   {91.29$_{\pm 0.26}$}   & 87.67$_{\pm 0.29}$      &   95.79$_{\pm 0.31}$        &   95.99$_{\pm 0.18}$ & 86.69
\\ \cline{2-9} 
\multicolumn{1}{|c|}{}  & w  (\ref{eq: regularized risk}) & \textbf{61.63$_{\pm 0.10}$}   &  {88.80$_{\pm 0.27}$}    &   91.17$_{\pm 0.17}$   &     88.27$_{\pm 0.14}$  &     \textbf{96.02}$_{\pm 0.16}$     &   {97.18$_{\pm 0.20}$} & \textbf{87.18}
\\ \cline{2-9} 
\multicolumn{1}{|c|}{}                                         & w (\ref{eq: normalized C}, \ref{eq: regularized risk})  &    61.04$_{\pm 0.25}$     &   88.53$_{\pm 0.04}$   &    91.21$_{\pm 0.31}$    &   \textbf{88.63}$_{\pm 0.21}$    &       95.92$_{\pm 0.45}$    &    96.51$_{\pm 0.53}$ &  86.97
\\
\cline{2-9} 
\multicolumn{1}{|c|}{}  & Time / epoch, w/o (\ref{eq: normalized C}, \ref{eq: regularized risk})  & 5min 39s       &  3min 24s    &     17min 21s   &    1min 55s  &    3min 25s     &  67min 41s & 16min 34s  \\
\cline{2-9} 
\multicolumn{1}{|c|}{}  & Time / epoch, w (\ref{eq: regularized risk})  &   6min 03s     &  4min 03s   &   19min 19s     &  2min 08s    &  3min 50s      & 73min 10s & 18min 6s
\\
 \hline \hline
\multicolumn{1}{|c|}{\multirow{6}{*}{S4D-Legs}}  & w/o (\ref{eq: normalized C}, \ref{eq: regularized risk}) &     60.80$_{\pm 0.39}$    &   87.87$_{\pm 0.03}$ &   90.68$_{\pm 0.14}$     &   86.69$_{\pm 0.29}$    &    94.87$_{\pm 0.06}$       &   97.34$_{\pm 0.07}$ & 86.38
\\ \cline{2-9} 
\multicolumn{1}{|c|}{}   & w (\ref{eq: normalized C})  &  60.97$_{\pm 0.27}$      &   {87.83}$_{\pm 0.16}$   &   91.08$_{\pm 0.19}$   &      87.89$_{\pm 0.11}$ &     {94.72}$_{\pm 0.21}$     &   {95.86$_{\pm 0.66}$}   & 86.40
\\ \cline{2-9} 
\multicolumn{1}{|c|}{}                                         & w (\ref{eq: regularized risk})  & 61.32$_{\pm 0.43}$        &   88.02$_{\pm 0.06}$   &   91.10$_{\pm 0.11}$    &   87.98$_{\pm 0.09}$   &   \textbf{95.04}$_{\pm 0.07}$        &    \textbf{97.46}$_{\pm 0.15}$ & \textbf{86.82}
\\ \cline{2-9} 
\multicolumn{1}{|c|}{}   & w (\ref{eq: normalized C}, \ref{eq: regularized risk})  &     \textbf{61.48}$_{\pm 0.09}$    &     \textbf{88.19$_{\pm 0.42}$} &   \textbf{91.25$_{\pm 0.17}$}   &  \textbf{88.12}$_{\pm 0.25}$     &     {94.93}$_{\pm 0.30}$       &   95.63$_{\pm 0.48}$ & 86.60
\\
\cline{2-9} 
\multicolumn{1}{|c|}{}  & Time / epoch, w/o (\ref{eq: normalized C}, \ref{eq: regularized risk})  &  5min 10s      &   3min 07s   &   16min 37s   & 1min 42s   &   3min 02s  &  49min 39s & 13min 13s
\\ 
\cline{2-9} 
\multicolumn{1}{|c|}{}  & Time / epoch, w (\ref{eq: regularized risk})  &      5min 33s      &  3min 13s    &    18min 43s    &     1min 56s &   3min 28s     & 55min 33s &  14min 44s
\\
\hline
\end{tabular}
}
\end{center}
\caption{Test accuracy and running time (per epoch on A100 GPU) on the LRA benchmark under different settings for different models. 
Mean and standard error are reported based on 3 independent runs.}
\label{table: lra normalization}
\end{table*}

This section contains experiments to demonstrate the effectiveness of the proposed initialization scheme (\ref{eq: normalized C}) and the regularization method (\ref{eq: regularized risk}).
We use a synthetic dataset and the Long Range Arena (LRA) benchmark \citep{tay2021long} for numerical validations. 
To simplify the notation, we use w/o (\ref{eq: normalized C}, \ref{eq: regularized risk}), w (\ref{eq: normalized C}), w (\ref{eq: regularized risk}) and w (\ref{eq: normalized C}, \ref{eq: regularized risk}) to represent the baseline training without rescaling and regularization, training with rescaling, training with regularization and training with both rescaling and regularization respectively.

\textbf{A synthetic dataset.}
We consider a synthetic sequence dataset generated by a Gaussian white noise.
To more closely resemble real datasets, we generate training inputs by sampling data from non-centered Gaussian white noise with mean $\mu(s) = 1$ and covariance $K(s,t) = \frac{1}{|b| \sqrt{\pi}} e^{-((s-t)/b)^2}$, which is a stationary Gaussian process and satisfies Assumption \ref{assumption: as finite} (see Section \ref{section: generalization bound}).
Then we can get different temporal dependencies by varying the coefficient $b$, i.e., as the magnitude of $b$ decreasing, the temporal dependence of the corresponding Gaussian white noise decreases as well.
In particular, as $b \xrightarrow{} 0$, $\frac{1}{|b| \sqrt{\pi}} e^{-(x/b)^2}$ becomes a delta function $\delta(x)$, entailing a zero temporal dependence for the sequence data.

In the following experiment, we generate the sequence data by the Gaussian white noise with $b = [1, 0.1, 0.01]$.
For each input sequence $(x_1,\ldots,x_L)$, its corresponding label is obtained by $\sin(x_{[L/2]})$, i.e., the sine value of the time-lagged input.
We use the S4-Legs model \citep{gu2022efficiently} (that only contains the convolution layer) to train the sequence data.
More details about the experiment setup are provided in Appendix \ref{appendix: toy experiment}. 
In Figure \ref{fig: toy result}, we plot the model output $\mathbb{E}_x[|y_L|]$, the gradient norm $\|\nabla R_n(\theta)\|$ at initialization, and the training loss (w (\ref{eq: normalized C})) with different temporal patterns by varying the Gaussian white noise parameter $b$.
We see that the initialization scheme (\ref{eq: normalized C}) enhances the robustness of the output value scales (matches with Proposition \ref{prop: initialization}), gradient norm at initialization and also the training loss value across different temporal structures.
In Table \ref{table: toy result}, we report the training loss, test loss and the dominant generalization measure $\psi_{\Theta}^2/\sqrt{n}$ after convergence. 
By comparing the final training loss with and without (\ref{eq: normalized C}) in Table \ref{table: toy result} (w/o (\ref{eq: normalized C}, \ref{eq: regularized risk}) vs w (\ref{eq: normalized C}) and w (\ref{eq: regularized risk}) vs w (\ref{eq: normalized C}, \ref{eq: regularized risk})), 
we see that adding the rescaling scheme (\ref{eq: normalized C}) also improves the training performance and makes the final training error more robust on different temporal dependencies (by varying $b$).
For the regularization method (\ref{eq: regularized risk}), we compare the final test loss with and without (\ref{eq: regularized risk}) in Table \ref{table: toy result} (w/o (\ref{eq: normalized C}, \ref{eq: regularized risk}) vs w (\ref{eq: regularized risk}) and w (\ref{eq: normalized C}) vs w (\ref{eq: normalized C}, \ref{eq: regularized risk})).
We can see that the our regularization method improves the generalization performance.
Moreover, combining (\ref{eq: normalized C}) and (\ref{eq: regularized risk}), the model get the best test performance across various temporal structures of the sequence data.
The positive correlation between the generalization measure $\psi_\Theta/\sqrt{n}$ and the test loss across different $b$ indicates that our generalization bound is able to capture different temporal dependencies.

\textbf{LRA benchmark.}
For real datasets, we investigate the effects of the initialization scheme (\ref{eq: normalized C}) and the regularization method (\ref{eq: regularized risk}) on the LRA benchmark, which contains 6 tasks ranging from image classification to language processing.
We consider to train two base models: $6$-layer S4-Legs \citep{goel2022s} and $6$-layer S4D-Legs \citep{gu2022s4d}.
For the S4-Legs model, the hidden state matrix $A$ is a full matrix while for the S4D-Legs model, $A$ is a diagonal matrix. 
We follow the training rules as described in \citet{gu2023how}.  
When training with regularization (\ref{eq: regularized risk}), we vary the regularization coefficient $\lambda$ with $10^{-3}, 10^{-4}, 10^{-5}$ for ListOps, Text, Retrieval, Image and Pathfinder tasks.
For the most challenging task PathX, $\lambda$ is taken from $10^{-4}, 10^{-5}, 10^{-6}$.
We report the best test accuracy when training with regularization (\ref{eq: regularized risk}), and we include the exact running time for each epoch in Table \ref{table: lra normalization}.
Note that the reproduction of the baseline numbers (w/o (\ref{eq: normalized C}, \ref{eq: regularized risk})) is inconsistent with the results in \citep{gu2022s4d}.
This is because we do not use the same PyTorch version and CUDA version as suggested in the official codebase, which may lead to the performance difference. However, these slight differences do not affect the scientific conclusions we draw from this paper.

By comparing the best test accuracy for w/o (\ref{eq: normalized C}, \ref{eq: regularized risk}) vs w (\ref{eq: regularized risk}) and w (\ref{eq: normalized C}) vs w (\ref{eq: normalized C}, \ref{eq: regularized risk}) in Table \ref{table: lra normalization}, 
we see that adding the regularization (\ref{eq: regularized risk}) enhances the generalization performance (test accuracy) in almost all the tasks for both S4-Legs and S4D-Legs models.
When only adding the initialization scheme, by comparing w (\ref{eq: normalized C}) vs w/o (\ref{eq: normalized C}, \ref{eq: regularized risk}), the rescaling method becomes less effective compared to the synthetic case. 
This is because for the LRA benchmark, we follow the the original S4 paper \citep{gu2023how} to add the batch norm/layer norm to the model, which may potentially help to decrease the temporal dependencies of the data, and thus the rescaling method is not so much effective as in the synthetic case.
However, when combining the initialization scheme (\ref{eq: normalized C}) and the regularization (\ref{eq: regularized risk}), one can still get the best test performance in half of tasks, indicating that our proposed optimization designs help to improve the generalization performance.
We also compare the running time without or with the proposed optimization designs.
Since (\ref{eq: normalized C}) is conducted before training which will not introduce additional training complexity, we report the running time for w/o (\ref{eq: normalized C}, (\ref{eq: regularized risk})) and w (\ref{eq: regularized risk}) in Table \ref{table: lra normalization}.
The results show that the regularization brings a little extra computational cost, matching the computational cost analysis in Section \ref{section: regulaization method}.
We provide an ablation study for the regularization coefficient $\lambda$ in Appendix \ref{appendix: lra experiment}.
Results in Table \ref{table: vary lam on bidirectional S4} and Table \ref{table: vary lam on bidirectional S4D-Legs} show that the test accuracy is much more sensitive to $\lambda$ for the Pathfinder and PathX tasks compared to other tasks, which aligns with the findings of in \citet{gu2023how} that challenging tasks are more sensitive to the hyperparameters.
More details on the dataset description and the experiment setup are given in Appendix \ref{appendix: lra experiment}.
We include additional experiment results in Appendix \ref{appendix: additional experiment} for small S4-Legs and S4D-Legs with either smaller depth or smaller feature dimension.
We can see in Table \ref{table: additional lra normalization} that the improvements for small models are more significant  (e.g., nearly $2\%$ on the most challenging PathX tasks for S4-Legs and $>1\%$ on the average accuracy for S4D-Legs).
We also provide comparisons for different regularization schemes for both synthetic and real dataset.
One regularization method is filter norm regularization, i.e., we regularize the $\ell_2$ norm of the filter $\rho_\theta$, and another is weight decay on the hidden matrix $A$.
Experiment results and details are shown in Appendix \ref{appendix: comparison with different regularizers}.

\section{Discussions}\label{section: discussion}

In this work, we study the optimization and the generalization for SSMs. Specifically, we give a data-dependent generalization bound, revealing an effect of the temporal dependencies of the sequence data on the generalization. 
Based on the bound, we design two algorithms to improve the optimization and generalization for SSMs across different temporal patterns. 
The first is a new initialization scheme, by which we rescale the initialization such that the generalization measure is normalized.
This initialization scheme improves the robustness of SSMs on the output scales across various temporal dependencies.
The second is a new regularization method, which enhances the generalization performance in sequence modeling with only little extra computation cost.
However, our theory does not apply to multi-layer SSMs and we do not address the 
feature dependencies when calculating the generalization measure (\ref{eq: proposed regularizer}) for high-dimensional SSMs, but simply treat all the features independent.
It is interesting to understand the effects of depth and feature structures on optimization and generalization of SSMs, which we leave for future work.

\section{Acknowledgement}

This research is supported by the National Research Foundation, Singapore, under the NRF fellowship (project No. NRF-NRFF13-2021-0005).

\bibliography{main}

\begin{thebibliography}{44}
\providecommand{\natexlab}[1]{#1}
\providecommand{\url}[1]{\texttt{#1}}
\expandafter\ifx\csname urlstyle\endcsname\relax
  \providecommand{\doi}[1]{doi: #1}\else
  \providecommand{\doi}{doi: \begingroup \urlstyle{rm}\Url}\fi

\bibitem[Allen-Zhu and Li(2019)]{allen2019can}
Zeyuan Allen-Zhu and Yuanzhi Li.
\newblock Can sgd learn recurrent neural networks with provable generalization?
\newblock \emph{Advances in Neural Information Processing Systems}, 32, 2019.

\bibitem[Azmoodeh et~al.(2014)Azmoodeh, Sottinen, Viitasaari, and Yazigi]{azmoodeh2014necessary}
Ehsan Azmoodeh, Tommi Sottinen, Lauri Viitasaari, and Adil Yazigi.
\newblock Necessary and sufficient conditions for h{\"o}lder continuity of gaussian processes.
\newblock \emph{Statistics \& Probability Letters}, 94:\penalty0 230--235, 2014.

\bibitem[Bartlett and Mendelson(2002)]{bartlett2002rademacher}
Peter~L Bartlett and Shahar Mendelson.
\newblock Rademacher and gaussian complexities: Risk bounds and structural results.
\newblock \emph{Journal of Machine Learning Research}, 3\penalty0 (Nov):\penalty0 463--482, 2002.

\bibitem[Baum and Petrie(1966)]{baum1966statistical}
Leonard~E Baum and Ted Petrie.
\newblock Statistical inference for probabilistic functions of finite state markov chains.
\newblock \emph{The annals of mathematical statistics}, 37\penalty0 (6):\penalty0 1554--1563, 1966.

\bibitem[Bengio et~al.(1994)Bengio, Simard, and Frasconi]{bengio1994learning}
Yoshua Bengio, Patrice Simard, and Paolo Frasconi.
\newblock Learning long-term dependencies with gradient descent is difficult.
\newblock \emph{IEEE transactions on neural networks}, 5\penalty0 (2):\penalty0 157--166, 1994.

\bibitem[Boucheron et~al.(2013)Boucheron, Lugosi, and Massart]{boucheron2013concentration}
S.~Boucheron, G.~Lugosi, and P.~Massart.
\newblock \emph{Concentration Inequalities: A Nonasymptotic Theory of Independence}.
\newblock OUP Oxford, 2013.

\bibitem[Chen et~al.(2019)Chen, Li, and Zhao]{chen2019generalization}
Minshuo Chen, Xingguo Li, and Tuo Zhao.
\newblock On generalization bounds of a family of recurrent neural networks.
\newblock \emph{arXiv preprint arXiv:1910.12947}, 2019.

\bibitem[Chung et~al.(2014)Chung, Gulcehre, Cho, and Bengio]{chung2014empirical}
Junyoung Chung, Caglar Gulcehre, KyungHyun Cho, and Yoshua Bengio.
\newblock Empirical evaluation of gated recurrent neural networks on sequence modeling.
\newblock \emph{arXiv preprint arXiv:1412.3555}, 2014.

\bibitem[Dasgupta and Sontag(1995)]{dasgupta1995sample}
Bhaskar Dasgupta and Eduardo Sontag.
\newblock Sample complexity for learning recurrent perceptron mappings.
\newblock \emph{Advances in Neural Information Processing Systems}, 8, 1995.

\bibitem[Devlin et~al.(2019)Devlin, Chang, Lee, and Toutanova]{devlin-etal-2019-bert}
Jacob Devlin, Ming-Wei Chang, Kenton Lee, and Kristina Toutanova.
\newblock {BERT}: Pre-training of deep bidirectional transformers for language understanding.
\newblock In \emph{Proceedings of the 2019 Conference of the North {A}merican Chapter of the Association for Computational Linguistics: Human Language Technologies, Volume 1 (Long and Short Papers)}, pages 4171--4186. Association for Computational Linguistics, June 2019.

\bibitem[Fu et~al.(2023)Fu, Dao, Saab, Thomas, Rudra, and Re]{fu2023hungry}
Daniel~Y Fu, Tri Dao, Khaled~Kamal Saab, Armin~W Thomas, Atri Rudra, and Christopher Re.
\newblock Hungry hungry hippos: Towards language modeling with state space models.
\newblock In \emph{The Eleventh International Conference on Learning Representations}, 2023.

\bibitem[Goel et~al.(2022)Goel, Gu, Donahue, and R{\'e}]{goel2022s}
Karan Goel, Albert Gu, Chris Donahue, and Christopher R{\'e}.
\newblock It’s raw! audio generation with state-space models.
\newblock In \emph{International Conference on Machine Learning}, pages 7616--7633. PMLR, 2022.

\bibitem[Gu and Dao(2023)]{gu2023mamba}
Albert Gu and Tri Dao.
\newblock Mamba: Linear-time sequence modeling with selective state spaces.
\newblock \emph{arXiv preprint arXiv:2312.00752}, 2023.

\bibitem[Gu et~al.(2022{\natexlab{a}})Gu, Goel, and Re]{gu2022efficiently}
Albert Gu, Karan Goel, and Christopher Re.
\newblock Efficiently modeling long sequences with structured state spaces.
\newblock In \emph{International Conference on Learning Representations}, 2022{\natexlab{a}}.

\bibitem[Gu et~al.(2022{\natexlab{b}})Gu, Gupta, Goel, and R\'e]{gu2022s4d}
Albert Gu, Ankit Gupta, Karan Goel, and Christopher R\'e.
\newblock On the parameterization and initialization of diagonal state space models.
\newblock \emph{Advances in Neural Information Processing Systems}, 35, 2022{\natexlab{b}}.

\bibitem[Gu et~al.(2023)Gu, Johnson, Timalsina, Rudra, and Re]{gu2023how}
Albert Gu, Isys Johnson, Aman Timalsina, Atri Rudra, and Christopher Re.
\newblock How to train your {HIPPO}: State space models with generalized orthogonal basis projections.
\newblock In \emph{International Conference on Learning Representations}, 2023.

\bibitem[Gupta et~al.(2022)Gupta, Gu, and Berant]{gupta2022diagonal}
Ankit Gupta, Albert Gu, and Jonathan Berant.
\newblock Diagonal state spaces are as effective as structured state spaces.
\newblock In \emph{Advances in Neural Information Processing Systems}, 2022.

\bibitem[He et~al.(2015)He, Zhang, Ren, and Sun]{he2015delving}
Kaiming He, Xiangyu Zhang, Shaoqing Ren, and Jian Sun.
\newblock Delving deep into rectifiers: Surpassing human-level performance on imagenet classification.
\newblock In \emph{Proceedings of the IEEE international conference on computer vision}, pages 1026--1034, 2015.

\bibitem[Hinton et~al.(2012)Hinton, Deng, Yu, Dahl, Mohamed, Jaitly, Senior, Vanhoucke, Nguyen, Sainath, et~al.]{hinton2012deep}
Geoffrey Hinton, Li~Deng, Dong Yu, George~E Dahl, Abdel-rahman Mohamed, Navdeep Jaitly, Andrew Senior, Vincent Vanhoucke, Patrick Nguyen, Tara~N Sainath, et~al.
\newblock Deep neural networks for acoustic modeling in speech recognition: The shared views of four research groups.
\newblock \emph{IEEE Signal processing magazine}, 29\penalty0 (6):\penalty0 82--97, 2012.

\bibitem[Hochreiter and Schmidhuber(1997)]{hochreiter1997long}
Sepp Hochreiter and J{\"u}rgen Schmidhuber.
\newblock Long short-term memory.
\newblock \emph{Neural computation}, 9\penalty0 (8):\penalty0 1735--1780, 1997.

\bibitem[Koiran and Sontag(1998)]{koiran1998vapnik}
Pascal Koiran and Eduardo~D Sontag.
\newblock Vapnik-chervonenkis dimension of recurrent neural networks.
\newblock \emph{Discrete Applied Mathematics}, 86\penalty0 (1):\penalty0 63--79, 1998.

\bibitem[Krizhevsky et~al.(2009)Krizhevsky, Hinton, et~al.]{krizhevsky2009learning}
Alex Krizhevsky, Geoffrey Hinton, et~al.
\newblock Learning multiple layers of features from tiny images.
\newblock 2009.

\bibitem[Ledoux and Talagrand(2013)]{ledoux2013probability}
Michel Ledoux and Michel Talagrand.
\newblock \emph{Probability in Banach Spaces: isoperimetry and processes}.
\newblock Springer Science \& Business Media, 2013.

\bibitem[Li et~al.(2019)Li, Jin, Xuan, Zhou, Chen, Wang, and Yan]{li2019enhancing}
Shiyang Li, Xiaoyong Jin, Yao Xuan, Xiyou Zhou, Wenhu Chen, Yu-Xiang Wang, and Xifeng Yan.
\newblock Enhancing the locality and breaking the memory bottleneck of transformer on time series forecasting.
\newblock \emph{Advances in neural information processing systems}, 32, 2019.

\bibitem[Li et~al.(2021)Li, Han, E, and Li]{li2021on}
Zhong Li, Jiequn Han, Weinan E, and Qianxiao Li.
\newblock On the curse of memory in recurrent neural networks: Approximation and optimization analysis.
\newblock In \emph{International Conference on Learning Representations}, 2021.

\bibitem[Li et~al.(2022)Li, Han, E, and Li]{li2022approximation}
Zhong Li, Jiequn Han, Weinan E, and Qianxiao Li.
\newblock Approximation and optimization theory for linear continuous-time recurrent neural networks.
\newblock \emph{The Journal of Machine Learning Research}, 23\penalty0 (1):\penalty0 1997--2081, 2022.

\bibitem[Linsley et~al.(2018)Linsley, Kim, Veerabadran, Windolf, and Serre]{linsley2018learning}
Drew Linsley, Junkyung Kim, Vijay Veerabadran, Charles Windolf, and Thomas Serre.
\newblock Learning long-range spatial dependencies with horizontal gated recurrent units.
\newblock \emph{Advances in neural information processing systems}, 31, 2018.

\bibitem[Maas et~al.(2011)Maas, Daly, Pham, Huang, Ng, and Potts]{maas-EtAl:2011:ACL-HLT2011}
Andrew~L. Maas, Raymond~E. Daly, Peter~T. Pham, Dan Huang, Andrew~Y. Ng, and Christopher Potts.
\newblock Learning word vectors for sentiment analysis.
\newblock In \emph{Proceedings of the 49th Annual Meeting of the Association for Computational Linguistics: Human Language Technologies}, pages 142--150. Association for Computational Linguistics, June 2011.

\bibitem[Mehta et~al.(2023)Mehta, Gupta, Cutkosky, and Neyshabur]{mehta2023long}
Harsh Mehta, Ankit Gupta, Ashok Cutkosky, and Behnam Neyshabur.
\newblock Long range language modeling via gated state spaces.
\newblock In \emph{The Eleventh International Conference on Learning Representations}, 2023.

\bibitem[Mohri et~al.(2012)Mohri, Rostamizadeh, and Talwalkar]{Mohri}
Mehryar Mohri, Afshin Rostamizadeh, and Ameet Talwalkar.
\newblock \emph{Foundations of Machine Learning}.
\newblock The MIT Press, 2012.

\bibitem[Nangia and Bowman(2018)]{nangia2018listops}
Nikita Nangia and Samuel~R Bowman.
\newblock Listops: A diagnostic dataset for latent tree learning.
\newblock \emph{arXiv preprint arXiv:1804.06028}, 2018.

\bibitem[Orvieto et~al.(2023)Orvieto, Smith, Gu, Fernando, Gulcehre, Pascanu, and De]{orvieto2023resurrecting}
Antonio Orvieto, Samuel~L Smith, Albert Gu, Anushan Fernando, Caglar Gulcehre, Razvan Pascanu, and Soham De.
\newblock Resurrecting recurrent neural networks for long sequences.
\newblock \emph{arXiv preprint arXiv:2303.06349}, 2023.

\bibitem[Pascanu et~al.(2013)Pascanu, Mikolov, and Bengio]{pascanu2013difficulty}
Razvan Pascanu, Tomas Mikolov, and Yoshua Bengio.
\newblock On the difficulty of training recurrent neural networks.
\newblock In \emph{International conference on machine learning}, pages 1310--1318. Pmlr, 2013.

\bibitem[Poli et~al.(2023)Poli, Massaroli, Nguyen, Fu, Dao, Baccus, Bengio, Ermon, and R{\'e}]{poli2023hyena}
Michael Poli, Stefano Massaroli, Eric Nguyen, Daniel~Y Fu, Tri Dao, Stephen Baccus, Yoshua Bengio, Stefano Ermon, and Christopher R{\'e}.
\newblock Hyena hierarchy: Towards larger convolutional language models.
\newblock \emph{arXiv preprint arXiv:2302.10866}, 2023.

\bibitem[Qu et~al.(2023)Qu, Luo, and Li]{qu2023data}
Eric Qu, Xufang Luo, and Dongsheng Li.
\newblock Data continuity matters: Improving sequence modeling with lipschitz regularizer.
\newblock In \emph{The Eleventh International Conference on Learning Representations}, 2023.

\bibitem[Radev et~al.(2009)Radev, Muthukrishnan, and Qazvinian]{radev-etal-2009-acl}
Dragomir~R. Radev, Pradeep Muthukrishnan, and Vahed Qazvinian.
\newblock The {ACL} {A}nthology network corpus.
\newblock In \emph{Proceedings of the 2009 Workshop on Text and Citation Analysis for Scholarly Digital Libraries ({NLPIR}4{DL})}, pages 54--61. Association for Computational Linguistics, August 2009.

\bibitem[Smith et~al.(2023)Smith, Warrington, and Linderman]{smith2023simplified}
Jimmy~T.H. Smith, Andrew Warrington, and Scott Linderman.
\newblock Simplified state space layers for sequence modeling.
\newblock In \emph{The Eleventh International Conference on Learning Representations}, 2023.

\bibitem[Stroock and Varadhan(1997)]{stroock1997multidimensional}
Daniel~W Stroock and SR~Srinivasa Varadhan.
\newblock \emph{Multidimensional diffusion processes}, volume 233.
\newblock Springer Science \& Business Media, 1997.

\bibitem[Tay et~al.(2021)Tay, Dehghani, Abnar, Shen, Bahri, Pham, Rao, Yang, Ruder, and Metzler]{tay2021long}
Yi~Tay, Mostafa Dehghani, Samira Abnar, Yikang Shen, Dara Bahri, Philip Pham, Jinfeng Rao, Liu Yang, Sebastian Ruder, and Donald Metzler.
\newblock Long range arena : A benchmark for efficient transformers.
\newblock In \emph{International Conference on Learning Representations}, 2021.

\bibitem[Tu et~al.(2019)Tu, He, and Tao]{tu2019understanding}
Zhuozhuo Tu, Fengxiang He, and Dacheng Tao.
\newblock Understanding generalization in recurrent neural networks.
\newblock In \emph{International Conference on Learning Representations}, 2019.

\bibitem[Vaswani et~al.(2017)Vaswani, Shazeer, Parmar, Uszkoreit, Jones, Gomez, Kaiser, and Polosukhin]{vaswani2017attention}
Ashish Vaswani, Noam Shazeer, Niki Parmar, Jakob Uszkoreit, Llion Jones, Aidan~N Gomez, {\L}ukasz Kaiser, and Illia Polosukhin.
\newblock Attention is all you need.
\newblock \emph{Advances in neural information processing systems}, 30, 2017.

\bibitem[Vershynin(2020)]{vershynin2020high}
Roman Vershynin.
\newblock High-dimensional probability.
\newblock \emph{University of California, Irvine}, 2020.

\bibitem[Wang et~al.(2023)Wang, Li, and Li]{wang2023inverse}
Shida Wang, Zhong Li, and Qianxiao Li.
\newblock Inverse approximation theory for nonlinear recurrent neural networks.
\newblock \emph{arXiv preprint arXiv:2305.19190}, 2023.

\bibitem[Zhang et~al.(2018)Zhang, Lei, and Dhillon]{zhang2018stabilizing}
Jiong Zhang, Qi~Lei, and Inderjit Dhillon.
\newblock Stabilizing gradients for deep neural networks via efficient svd parameterization.
\newblock In \emph{International Conference on Machine Learning}, pages 5806--5814. PMLR, 2018.

\end{thebibliography}

\newpage

\appendix

\section{Experiments details}

In this section, we provide more details for the experiments of the
synthetic dataset and the LRA benchmark in Section \ref{section: experiment}.

\subsection{The synthetic experiment}\label{appendix: toy experiment}

For the Gaussian white noise sequences, we generate $100$ i.i.d. sequences for training and $1000$ i.i.d. sequences for test.
The timescale for the discrete sequences is set to be $1$, i.e., to generate a Gaussian white noise sequence with length $L$, we sample from a multivariate normal distribution with mean $1$ and covariance matrix $K_{i,j} = h(i-j)$ for $i, j \in [1:L]$, where $h(t) = \frac{1}{|b| \sqrt{\pi}} e^{-(t/b)^2}$.
The model that we use is the one-layer S4 model that only contains the FFTConv (fast Fourier transform convolution) layer and without activation and the skip connection ($D = 0$) \citep{gu2022efficiently}.
The state space dimension for the FFTConv layer is $64$, other settings such as the discretization, the initialization and the parameterization follow the default settings in \citet{gu2023how}, i.e., we use 
the ZOH discretization,
the LegS initialization and the exponential parameterization for the hidden state matrix $A$. 

For the optimizer, we follow \citet{gu2023how} to set the optimizer by groups.
For the (ZOH) timescale $\Delta$, the hidden state matrices $A, B$, we use Adam optimizer with learning rate $0.001$, while for the matrix $C$, we use AdamW with learning rate $0.01$ and decay rate $0.01$.
For all the parameters, we use the cosine annealing schedule.
The batch size is set to be $100$ (full batch) and the training epochs is $100$.
The regularization coefficient $\lambda$ used for training with (\ref{eq: regularized risk}) is set to be $0.01$ across all the temporal patterns.

\subsection{LRA benchmark}\label{appendix: lra experiment}

\textbf{Datasets.}
The datasets in the LRA benchmark contain 
(1) ListOps \citep{nangia2018listops}, a dataset that is made up of a list of mathematical operations with answers;
(2) Text \citep{maas-EtAl:2011:ACL-HLT2011}, a movie review dataset collected from IMDB, which is used for sentiment analysis;
(3) Retrieval \citep{radev-etal-2009-acl}, a task of retrieving documents utilizing byte-level texts from the ACL Anthology Network.
(4)Image \citep{krizhevsky2009learning}, a sequential CIFAR10 dataset used for sequence classification; 
(5) Pathfinder \citep{linsley2018learning}, a task that requires a model to tell whether two points in an image are connected by a dashed path.
(6) PathX, a similar but more challenge task as Pathfinder with a higher image resolution increased from $32 \times 32$ to $128 \times 128$.

\textbf{Models.}
The models consist of S4-Legs and S4D-Legs.
Both models use the default Legs initialization. 
Discretization and model parameterization are set to be consistent with \citet{gu2023how}.
For the optimizer, we also follow the standard setup in \citet{gu2023how} that the hidden state matrices are trained in a relatively small learning rate with no weight decay, while other parameters are trained with AdamW with a larger learning rate.
Let $D, H, N$ denote the depth, feature dimension and hidden state space dimension respectively, we summarize the model hyperparameters for S4-Legs and S4D-Legs in Table \ref{table: bidirectional s4 model paramters} and Table \ref{table: bidirectional s4d model paramters} respectively.

\begin{table}[ht]
\begin{center}
\begin{tabular}{|c|c|c|c|c|c|c|c|c|}
\hline 
&   $D$ &  $H$ & $N$  &  { Dropout } &  { Learning rate } &  { Batch size } & 
 { Epochs } &  { Weight decay } \\
\hline 
 { ListOps } & 6 & 256 & 4  & 0 & 0.01 & 32 & 40 & 0.05  \\
 \hline
 { Text } & 6 & 256 & 4 &   0 & 0.01 & 16 & 32 & 0.05 \\
 \hline
 { Retrieval } & 6 & 256 & 4 &  0 & 0.01 & 64 & 20 & 0.05  \\
 \hline
 { Image } & 6 & 512 & 64 &  0.1 & 0.01 & 50 & 200 & 0.05 \\
 \hline
 { Pathfinder } & 6 & 256 & 64 & 0.0 & 0.004 & 64 & 200 & 0.05 \\
\hline
 { PathX } & 6 & 256 & 64 & 0.0 & 0.0005 & 16 & 50 & 0.05 \\
 \hline
\end{tabular}
\end{center}
\caption{List of the S4-Legs model hyperparameters for the LRA benchmark.}
\label{table: bidirectional s4 model paramters}
\end{table}

\begin{table}[ht]
\begin{center}
\begin{tabular}{|c|c|c|c|c|c|c|c|c|}
\hline 
&   $D$ &  $H$ & $N$  &  { Dropout } &  { Learning rate } &  { Batch size } & 
 { Epochs } &  { Weight decay } \\
\hline 
 { ListOps } & 6 & 256 & 4  & 0 & 0.01 & 32 & 40 & 0.05  \\
 \hline
 { Text } & 6 & 256 & 4 &   0 & 0.01 & 16 & 32 & 0.05 \\
 \hline
 { Retrieval } & 6 & 256 & 4 &  0 & 0.01 & 64 & 20 & 0.05  \\
 \hline
 { Image } & 6 & 512 & 64 &  0.1 & 0.01 & 50 & 200 & 0.05 \\
 \hline
 { Pathfinder } & 6 & 256 & 64 & 0.0 & 0.004 & 64 & 200 & 0.05 \\
\hline
 { PathX } & 6 & 256 & 64 & 0.0 & 0.0005 & 16 & 50 & 0.05 \\
 \hline
\end{tabular}
\end{center}
\caption{List of the S4D-Legs model hyperparameters for the LRA benchmark.}
\label{table: bidirectional s4d model paramters}
\end{table}

\begin{table}[ht]
\vspace{4mm}
\centering
\begin{subtable}[t]{.3\linewidth}
\centering
\begin{tabular}{|c|c|}
\hline
& ListOps \\
\hline
$\lambda = 0$ & {61.16}$_{\pm 0.32}$ \\
\hline
$\lambda = 10^{-5}$ & {61.36}$_{\pm 0.30}$ \\
\hline
$\lambda = 10^{-4}$ & {61.11}$_{\pm 0.10}$ \\
\hline
$\lambda = 10^{-3}$ & \textbf{61.63}$_{\pm 0.10}$ \\
\hline
\end{tabular}
\end{subtable}\hfill
\begin{subtable}[t]{.3\linewidth}
\centering
\begin{tabular}{|c|c|}
\hline
 & Text \\
\hline
$\lambda = 0$ & {88.69}$_{\pm 0.07}$ \\
\hline
$\lambda = 10^{-5}$ & \textbf{88.80}$_{\pm 0.27}$ \\
\hline
$\lambda = 10^{-4}$ & {88.66}$_{\pm 0.20}$ \\
\hline
$\lambda = 10^{-3}$ & {88.71}$_{\pm 0.12}$ \\
\hline
\end{tabular}
\end{subtable}\hfill
\vspace{1em}
\begin{subtable}[t]{.3\linewidth}
\centering
\begin{tabular}{|c|c|}
\hline
 & Retrieval \\
\hline
$\lambda = 0$ & \textbf{91.21}$_{\pm 0.17}$ \\
\hline
$\lambda = 10^{-5}$ & {91.17}$_{\pm 0.17}$ \\
\hline
$\lambda = 10^{-4}$ & {89.77}$_{\pm 2.28}$ \\
\hline
$\lambda = 10^{-3}$ & {88.25}$_{\pm 2.66}$ \\
\hline
\end{tabular}
\end{subtable}\hfill
\begin{subtable}[t]{.3\linewidth}
\centering
\begin{tabular}{|c|c|}
\hline
 & Image \\
\hline
$\lambda = 0$ & {87.41}$_{\pm 0.14}$ \\
\hline 
$\lambda = 10^{-5}$& {87.43}$_{\pm 0.33}$ \\
\hline 
$\lambda = 10^{-4}$& {87.45}$_{\pm 0.39}$ \\
\hline
$\lambda = 10^{-3}$& \textbf{88.27}$_{\pm 0.14}$ \\
\hline
\end{tabular}
\end{subtable}\hfill
\begin{subtable}[t]{.3\linewidth}
\centering
\begin{tabular}{|c|c|}
\hline
 & Pathfinder \\
\hline
$\lambda = 0$ & {95.89}$_{\pm 0.10}$ \\
\hline
$\lambda = 10^{-5}$ &  \textbf{96.02}$_{\pm 0.16}$ \\
\hline
$\lambda = 10^{-4}$ & {95.81}$_{\pm 0.33}$ \\
\hline
$\lambda = 10^{-3}$ & {89.06}$_{\pm 8.31}$ \\
\hline
\end{tabular}
\end{subtable}\hfill
\begin{subtable}[t]{.3\linewidth}
\centering
\begin{tabular}{|c|c|}
\hline
 & PathX \\
\hline
$\lambda = 0$ & {96.97}$_{\pm 0.31}$ \\
\hline
$\lambda = 10^{-6}$ & \textbf{97.18}$_{\pm 0.20}$ \\
\hline
$\lambda = 10^{-5}$ &  {97.16}$_{\pm 0.13}$ \\
\hline 
$\lambda = 10^{-4}$ & \XSolidBrush \\
\hline
\end{tabular}
\end{subtable}
\caption{Test accuracy for S4-Legs on LRA benchmark by varying the regularization coefficient  $\lambda$.}
\label{table: vary lam on bidirectional S4}
\end{table}

\begin{table}[h!]
\vspace{4mm}
\centering
\begin{subtable}[t]{.3\linewidth}
\centering
\begin{tabular}{|c|c|}
\hline
& ListOps \\
\hline
$\lambda = 0$ & 60.80$_{\pm 0.39}$ \\
\hline
$\lambda = 10^{-5}$ & 60.85$_{\pm 0.62}$ \\
\hline
$\lambda = 10^{-4}$ & 60.80$_{\pm 0.44}$ \\
\hline
$\lambda = 10^{-3}$ & \textbf{61.32}$_{\pm 0.43}$ \\
\hline
\end{tabular}
\end{subtable}\hfill
\begin{subtable}[t]{.3\linewidth}
\centering
\begin{tabular}{|c|c|}
\hline
 & Text \\
\hline
$\lambda = 0$ & {87.87}$_{\pm 0.03}$ \\
\hline
$\lambda = 10^{-5}$ & {87.64}$_{\pm 0.17}$ \\
\hline
$\lambda = 10^{-4}$ & {87.87}$_{\pm 0.36}$ \\
\hline
$\lambda = 10^{-3}$ & \textbf{88.02}$_{\pm 0.06}$ \\
\hline
\end{tabular}
\end{subtable}\hfill
\vspace{1em}
\begin{subtable}[t]{.3\linewidth}
\centering
\begin{tabular}{|c|c|}
\hline
 & Retrieval \\
\hline
$\lambda = 0$ & {90.68}$_{\pm 0.14}$ \\
\hline
$\lambda = 10^{-5}$ & {91.04}$_{\pm 0.13}$ \\
\hline
$\lambda = 10^{-4}$ & {90.95}$_{\pm 0.20}$ \\
\hline
$\lambda = \times 10^{-3}$ & \textbf{91.10}$_{\pm 0.11}$ \\
\hline
\end{tabular}
\end{subtable}\hfill
\begin{subtable}[t]{.3\linewidth}
\centering
\begin{tabular}{|c|c|}
\hline
 & Image \\
\hline
$\lambda = 0$ & {86.69}$_{\pm 0.29}$ \\
\hline 
$\lambda = 10^{-5}$& {86.91}$_{\pm 0.12}$ \\
\hline 
$\lambda = 10^{-4}$& {86.96}$_{\pm 0.22}$ \\
\hline
$\lambda = 10^{-3}$& \textbf{87.98}$_{\pm 0.09}$ \\
\hline
\end{tabular}
\end{subtable}\hfill
\begin{subtable}[t]{.3\linewidth}
\centering
\begin{tabular}{|c|c|}
\hline
 & Pathfinder \\
\hline
$\lambda = 0$ & {94.87}$_{\pm 0.06}$ \\
\hline
$\lambda = 10^{-5}$ &  \textbf{95.04}$_{\pm 0.07}$ \\
\hline
$\lambda = 10^{-4}$ & {94.38}$_{\pm 0.15}$ \\
\hline
$\lambda = 10^{-3}$ & {64.56}$_{\pm 19.94}$ \\
\hline
\end{tabular}
\end{subtable}\hfill
\begin{subtable}[t]{.3\linewidth}
\centering
\begin{tabular}{|c|c|}
\hline
 & PathX \\
\hline
$\lambda = 0$ & {97.34}$_{\pm 0.07}$ \\
\hline
$\lambda = 10^{-6}$ & {97.32}$_{\pm 0.14}$ \\
\hline
$\lambda = 10^{-5}$ &  \textbf{97.46}$_{\pm 0.15}$ \\
\hline 
$\lambda = 10^{-4}$ & \XSolidBrush \\
\hline
\end{tabular}
\end{subtable}
\caption{Test accuracy for S4D-Legs on LRA benchmark by varying the regularization coefficient  $\lambda$.}
\label{table: vary lam on bidirectional S4D-Legs}
\end{table}

\begin{table}[h!]
\begin{center}
\begin{tabular}{|c|c|c|c|c|c|c|c|c|}
\hline 
&   $D$ &  $H$ & $N$  &  { Dropout } &  { Learning rate } &  { Batch size } & 
 { Epochs } &  { Weight decay } \\
\hline 
 { ListOps } & 4 & 128 & 64  & 0 & 0.01 & 50 & 40 & 0.05  \\
 \hline
 { Text } & 4 & 128 & 64 &   0 & 0.01 & 50 & 50 & 0.0 \\
 \hline
 { Retrieval } & 4 & 96 & 4 &  0 & 0.01 & 64 & 20 & 0.05  \\
 \hline
 { Image } & 4 & 128 & 64 &  0.1 & 0.01 & 50 & 100 & 0.05 \\
 \hline
 { Pathfinder } & 6 & 128 & 64 & 0.0 & 0.004 & 64 & 40 & 0.01 \\
\hline
 { PathX } & 4 & 96 & 64 & 0.0 & 0.0005 & 64 & 50 & 0.05 \\
 \hline
\end{tabular}
\end{center}
\caption{List of the small S4-Legs model hyperparameters for the LRA benchmark.}
\label{table: bidirectional small s4 model paramters}
\end{table}

\begin{table}[h!]
\begin{center}
\begin{tabular}{|c|c|c|c|c|c|c|c|c|}
\hline 
&   $D$ &  $H$ & $N$  &  { Dropout } &  { Learning rate } &  { Batch size } & 
 { Epochs } &  { Weight decay } \\
\hline 
 { ListOps } & 4 & 128 & 64  & 0 & 0.01 & 50 & 40 & 0.05  \\
 \hline
 { Text } & 4 & 128 & 64 &   0 & 0.01 & 50 & 50 & 0.0 \\
 \hline
 { Retrieval } & 4 & 96 & 4 &  0 & 0.01 & 64 & 20 & 0.05  \\
 \hline
 { Image } & 4 & 128 & 64 &  0.1 & 0.01 & 50 & 100 & 0.05 \\
 \hline
 { Pathfinder } & 6 & 128 & 64 & 0.0 & 0.004 & 64 & 40 & 0.01 \\
\hline
 { PathX } & 4 & 96 & 64 & 0.0 & 0.0005 & 64 & 50 & 0.05 \\
 \hline
\end{tabular}
\end{center}
\caption{List of the small S4D-Legs model hyperparameters for the LRA benchmark.}
\label{table: bidirectional small s4d model paramters}
\end{table}

\begin{table*}[h!]
\begin{center}
\resizebox{\textwidth}{!}{%
\begin{tabular}{|cc|c|c|c|c|c|c|c|}
\hline
\multicolumn{2}{|c|}{}     & ListOps & Text & Retrieval & Image & Pathfinder & PathX & Average \\ 
\hline \hline
\multicolumn{1}{|c|}{\multirow{7}{*}{S4-Legs}}   & w/o (\ref{eq: normalized C}, \ref{eq: regularized risk}) &    {55.38}$_{\pm 0.76}$   &    {84.72}$_{\pm 0.40}$  &  85.75$_{\pm 0.46}$     &   82.07$_{\pm 0.11}$    &      89.36$_{\pm 0.38}$       &    88.75$_{\pm 0.62}$ & 81.01
\\ \cline{2-9} 
\multicolumn{1}{|c|}{}  & w (\ref{eq: normalized C})  &  53.72$_{\pm 1.59}$   &    85.21$_{\pm 0.21}$  &   {84.47$_{\pm 1.50}$}   & 83.71$_{\pm 0.21}$      &   89.16$_{\pm 1.38}$        &   88.96$_{\pm 1.62}$ & 80.87
\\ \cline{2-9} 
\multicolumn{1}{|c|}{}  & w  (\ref{eq: regularized risk}) & \textbf{55.43$_{\pm 1.55}$}   &  {85.12$_{\pm 0.34}$}    &   83.30$_{\pm 1,75}$   &     83.86$_{\pm 0.25}$  &     \textbf{89.39}$_{\pm 0.34}$     &   \textbf{90.70$_{\pm 0.61}$} & {81.30}
\\ \cline{2-9} 
\multicolumn{1}{|c|}{}                                         & w (\ref{eq: normalized C}, \ref{eq: regularized risk})  &    54.97$_{\pm 0.30}$     &   \textbf{85.27}$_{\pm 0.21}$   &    \textbf{85.82}$_{\pm 0.42}$    &   \textbf{84.74}$_{\pm 0.18}$    &       88.64$_{\pm 0.36}$    &    90.19$_{\pm 0.90}$ &  \textbf{81.61}
\\
\cline{2-9} 
\multicolumn{1}{|c|}{}  & Time / epoch, w/o (\ref{eq: normalized C}, \ref{eq: regularized risk})  & 2min 06s       &  50s    &     5min 57s   &    33s  &    2min 13s     &  10min 33s & 3min 42s  \\
\cline{2-9} 
\multicolumn{1}{|c|}{}  & Time / epoch, w (\ref{eq: regularized risk})  &   2min 18s     &  52s   &   6min 28s     &  37s    &  2min 31s      & 11min 46s & 4min 6s
\\
 \hline \hline
\multicolumn{1}{|c|}{\multirow{6}{*}{S4D-Legs}}  & w/o (\ref{eq: normalized C}, \ref{eq: regularized risk}) &     55.17$_{\pm 0.20}$    &   83.60$_{\pm 0.09}$ &   89.12$_{\pm 0.14}$     &   81.07$_{\pm 0.39}$    &    87.28$_{\pm 0.47}$       &   89.91$_{\pm 0.53}$ & 81.03
\\ \cline{2-9} 
\multicolumn{1}{|c|}{}   & w (\ref{eq: normalized C})  &  55.80$_{\pm 0.11}$      &   {85.30}$_{\pm 0.10}$   &   89.32$_{\pm 0.17}$   &      82.35$_{\pm 0.56}$ &     {88.00}$_{\pm 0.82}$     &   {90.15$_{\pm 0.86}$}   & 81.82
\\ \cline{2-9} 
\multicolumn{1}{|c|}{}                                         & w (\ref{eq: regularized risk})  & \textbf{56.45}$_{\pm 0.33}$        &   84.86$_{\pm 0.38}$   &   89.21$_{\pm 0.09}$    &   82.39$_{\pm 0.18}$   &   {87.86}$_{\pm 0.31}$        &    \textbf{90.95}$_{\pm 0.21}$ & {81.95}
\\ \cline{2-9} 
\multicolumn{1}{|c|}{}   & w (\ref{eq: normalized C}, \ref{eq: regularized risk})  &     {55.82}$_{\pm 0.66}$    &     \textbf{85.50$_{\pm 0.06}$} &   \textbf{89.34$_{\pm 0.04}$}   &  \textbf{83.79}$_{\pm 0.29}$     &     \textbf{88.53}$_{\pm 0.69}$       &   90.51$_{\pm 1.01}$ & \textbf{82.25}
\\
\cline{2-9} 
\multicolumn{1}{|c|}{}  & Time / epoch, w/o (\ref{eq: normalized C}, \ref{eq: regularized risk})  &  1min 53s      &   47s   &   5min 40s   & 29s   &   2min  &  9min 52s & 3min 27s
\\ 
\cline{2-9} 
\multicolumn{1}{|c|}{}  & Time / epoch, w (\ref{eq: regularized risk})  &      2min 11s      &  48s    &    6min 15s    &     34s &   2min 16s     & 11min 05s &  3min 52s
\\
\hline
\end{tabular}
}
\end{center}
\caption{Test accuracy and running time (per epoch on A100 GPU) on the LRA benchmark under different settings for small S4-Legs and S4D-Legs. 
Mean and standard error are reported based on 3 independent runs.}
\label{table: additional lra normalization}
\end{table*}

\begin{table}[h!]
\vspace{4mm}
\centering
\begin{subtable}[t]{.3\linewidth}
\centering
\begin{tabular}{|c|c|}
\hline
& ListOps \\
\hline
$\lambda = 0$ & {55.38}$_{\pm 0.76}$ \\
\hline
$\lambda = 10^{-5}$ & {55.32}$_{\pm 1.03}$ \\
\hline
$\lambda = 10^{-4}$ & \textbf{55.43}$_{\pm 1.55}$ \\
\hline
$\lambda = 10^{-3}$ & {55.33}$_{\pm 0.44}$ \\
\hline
\end{tabular}
\end{subtable}\hfill
\begin{subtable}[t]{.3\linewidth}
\centering
\begin{tabular}{|c|c|}
\hline
 & Text \\
\hline
$\lambda = 0$ & {84.72}$_{\pm 0.40}$ \\
\hline
$\lambda = 10^{-5}$ & {84.74}$_{\pm 0.21}$ \\
\hline
$\lambda = 10^{-4}$ & {84.62}$_{\pm 0.18}$ \\
\hline
$\lambda = 10^{-3}$ & \textbf{85.12}$_{\pm 0.34}$ \\
\hline
\end{tabular}
\end{subtable}\hfill
\vspace{1em}
\begin{subtable}[t]{.3\linewidth}
\centering
\begin{tabular}{|c|c|}
\hline
 & Retrieval \\
\hline
$\lambda = 0$ & \textbf{85.75}$_{\pm 0.46}$ \\
\hline
$\lambda = 10^{-5}$ & {83.30}$_{\pm 1.75}$ \\
\hline
$\lambda = 10^{-4}$ & {82.71}$_{\pm 1.18}$ \\
\hline
$\lambda = 10^{-3}$ & {82.09}$_{\pm 0.41}$ \\
\hline
\end{tabular}
\end{subtable}\hfill
\begin{subtable}[t]{.3\linewidth}
\centering
\begin{tabular}{|c|c|}
\hline
 & Image \\
\hline
$\lambda = 0$ & {82.07}$_{\pm 0.11}$ \\
\hline 
$\lambda = 10^{-5}$& {82.80}$_{\pm 0.32}$ \\
\hline 
$\lambda = 10^{-4}$& {82.98}$_{\pm 0.15}$ \\
\hline
$\lambda = 10^{-3}$& \textbf{83.86}$_{\pm 0.25}$ \\
\hline
\end{tabular}
\end{subtable}\hfill
\begin{subtable}[t]{.3\linewidth}
\centering
\begin{tabular}{|c|c|}
\hline
 & Pathfinder \\
\hline
$\lambda = 0$ & {89.36}$_{\pm 0.38}$ \\
\hline
$\lambda = 10^{-5}$ &  \textbf{89.39}$_{\pm 0.34}$ \\
\hline
$\lambda = 10^{-4}$ & {89.20}$_{\pm 0.19}$ \\
\hline
$\lambda = 10^{-3}$ & {50.54}$_{\pm 0.01}$ \\
\hline
\end{tabular}
\end{subtable}\hfill
\begin{subtable}[t]{.3\linewidth}
\centering
\begin{tabular}{|c|c|}
\hline
 & PathX \\
\hline
$\lambda = 0$ & {88.75}$_{\pm 0.62}$ \\
\hline
$\lambda = 10^{-6}$ & {88.51}$_{\pm 0.70}$ \\
\hline
$\lambda = 10^{-5}$ &  {89.71}$_{\pm 0.40}$ \\
\hline 
$\lambda = 10^{-4}$ & \textbf{90.70}$_{\pm 0.61}$ \\
\hline
\end{tabular}
\end{subtable}
\caption{Test accuracy for small S4-Legs on LRA benchmark by varying the regularization coefficient  $\lambda$.}
\label{table: vary lam on small bidirectional S4}
\end{table}

\begin{table}[h!]
\vspace{4mm}
\centering
\begin{subtable}[t]{.3\linewidth}
\centering
\begin{tabular}{|c|c|}
\hline
& ListOps \\
\hline
$\lambda = 0$ & 55.17$_{\pm 0.20}$ \\
\hline
$\lambda = 10^{-5}$ & \textbf{56.45}$_{\pm 0.33}$ \\
\hline
$\lambda = 10^{-4}$ & 56.03$_{\pm 1.36}$ \\
\hline
$\lambda = 10^{-3}$ & {55.48}$_{\pm 0.50}$ \\
\hline
\end{tabular}
\end{subtable}\hfill
\begin{subtable}[t]{.3\linewidth}
\centering
\begin{tabular}{|c|c|}
\hline
 & Text \\
\hline
$\lambda = 0$ & {83.60}$_{\pm 0.09}$ \\
\hline
$\lambda = 10^{-5}$ & {84.13}$_{\pm 0.48}$ \\
\hline
$\lambda = 10^{-4}$ & {84.48}$_{\pm 0.20}$ \\
\hline
$\lambda = 10^{-3}$ & \textbf{84.86}$_{\pm 0.38}$ \\
\hline
\end{tabular}
\end{subtable}\hfill
\vspace{1em}
\begin{subtable}[t]{.3\linewidth}
\centering
\begin{tabular}{|c|c|}
\hline
 & Retrieval \\
\hline
$\lambda = 0$ & {89.12}$_{\pm 0.14}$ \\
\hline
$\lambda = 10^{-5}$ & \textbf{89.21}$_{\pm 0.09}$ \\
\hline
$\lambda = 10^{-4}$ & {89.18}$_{\pm 0.11}$ \\
\hline
$\lambda = \times 10^{-3}$ & {88.97}$_{\pm 0.07}$ \\
\hline
\end{tabular}
\end{subtable}\hfill
\begin{subtable}[t]{.3\linewidth}
\centering
\begin{tabular}{|c|c|}
\hline
 & Image \\
\hline
$\lambda = 0$ & {81.07}$_{\pm 0.39}$ \\
\hline 
$\lambda = 10^{-5}$& {81.39}$_{\pm 0.35}$ \\
\hline 
$\lambda = 10^{-4}$& {81.71}$_{\pm 0.39}$ \\
\hline
$\lambda = 10^{-3}$& \textbf{82.39}$_{\pm 0.18}$ \\
\hline
\end{tabular}
\end{subtable}\hfill
\begin{subtable}[t]{.3\linewidth}
\centering
\begin{tabular}{|c|c|}
\hline
 & Pathfinder \\
\hline
$\lambda = 0$ & {87.28}$_{\pm 0.47}$ \\
\hline
$\lambda = 10^{-5}$ &  \textbf{87.86}$_{\pm 0.31}$ \\
\hline
$\lambda = 10^{-4}$ & {50.14}$_{\pm 0.57}$ \\
\hline
$\lambda = 10^{-3}$ & {50.54}$_{\pm 0.00}$ \\
\hline
\end{tabular}
\end{subtable}\hfill
\begin{subtable}[t]{.3\linewidth}
\centering
\begin{tabular}{|c|c|}
\hline
 & PathX \\
\hline
$\lambda = 0$ & {89.91}$_{\pm 0.53}$ \\
\hline
$\lambda = 10^{-6}$ & {89.79}$_{\pm 0.65}$ \\
\hline
$\lambda = 10^{-5}$ &  \textbf{90.95}$_{\pm 0.21}$ \\
\hline 
$\lambda = 10^{-4}$ & {86.32}$_{\pm 1.53}$ \\
\hline
\end{tabular}
\end{subtable}
\caption{Test accuracy for small S4D-Legs on LRA benchmark by varying the regularization coefficient  $\lambda$.}
\label{table: vary lam on small bidirectional S4D-Legs}
\end{table}

\begin{table*}[t]
\begin{center}
\begin{tabular}{|c|cccc|}
\hline
& \multicolumn{4}{c|}{Test loss (MSE)} \\
\cline{2-5}
& w/o (\ref{eq: normalized C}, \ref{eq: regularized risk}) & w (\ref{eq: regularized risk}) & Weight decay on $A$ & Filter norm regularization 
\\
\hline
$b=1$ & {0.25}$_{\pm 0.01}$ & $\textbf{0.22}_{\pm 0.008}$ & $0.24_{\pm 0.004}$ & $\textbf{0.22}_{\pm 0.007}$
\\
\hline
$b=0.1$ & 1.01$_{\pm 0.14}$ & $\textbf{0.87}_{\pm 0.07}$ & $0.97_{\pm 0.07}$ & $0.96_{\pm 0.12}$ 
\\
\hline
$b=0.01$ & $4.70_{\pm 0.77}$ & $\textbf{3.59}_{\pm 0.09}$ & $4.23_{\pm 0.23}$ & $4.61_{\pm 0.73}$
\\
\hline
\end{tabular}
\end{center}
\caption{Test loss for different regularization methods on synthetic data after convergence.}
\label{table: synthetic regularizer compare}
\end{table*}

\begin{table*}[t]
\centering
\begin{tabular}{|c|c|c|c|c|c|c|c|}
\hline
S4-Legs & ListOps & Text & Retrieval & Image & Pathfinder & PathX & Avg \\ \hline
w/o (\ref{eq: normalized C}, \ref{eq: regularized risk})       & 61.16$_{\pm 0.32}$ & 88.69$_{\pm 0.07}$ & 91.21$_{\pm 0.17}$ & 87.41$_{\pm 0.14}$ & 95.89$_{\pm 0.10}$ & 96.97$_{\pm 0.31}$ & 86.89 \\ \hline
w (\ref{eq: regularized risk})           & \textbf{61.63}$_{\pm 0.10}$ & 88.80$_{\pm 0.27}$ & 91.17$_{\pm 0.17}$ & \textbf{88.27}$_{\pm 0.14}$ & \textbf{96.02}$_{\pm 0.16}$ & 97.18$_{\pm 0.20}$ & \textbf{87.18} \\ \hline
Weight decay for $A$ & 49.90$_{\pm 0.67}$ & 86.58$_{\pm 0.91}$ & 91.21$_{\pm 0.17}$ & 87.65$_{\pm 0.16}$ & 96.00$_{\pm 0.09}$ & \textbf{97.22}$_{\pm 0.05}$ & 84.76 \\ \hline
Filter norm regularization & 61.53$_{\pm 0.39}$ & \textbf{88.88}$_{\pm 0.13}$ & \textbf{91.44}$_{\pm 0.08}$ & 87.70$_{\pm 0.20}$ & 95.83$_{\pm 0.14}$ & 97.16$_{\pm 0.16}$ & 87.09 \\ \hline
\end{tabular}
\caption{Test accuracy for different regularization methods on the LRA benchmark for S4-Legs.}
\label{table: LRA regularizer compare}
\end{table*}

\textbf{Ablation studies on $\lambda$.}
When training with the regularization method (\ref{eq: regularized risk}), we vary the regularization coefficient $\lambda$ for different magnitudes ranging from $10^{-6}$ to $10^{-3}$ when the model performs best on the validation set.
In Table \ref{table: vary lam on bidirectional S4} and Table \ref{table: vary lam on bidirectional S4D-Legs}, we report the test accuracy on the LRA benchmark with different $\lambda$ for the S4-Legs and S4D-Legs model respectively.
From the results in Table \ref{table: vary lam on bidirectional S4} and Table \ref{table: vary lam on bidirectional S4D-Legs},
we find that for both models, 
adding the regularization helps the generalization performance (test accuracy) for all the tasks except for the Retrieval task trained by the S4-Legs model. 
In particular, 
the test accuracy is much more sensitive to the regularization coefficient $\lambda$ for the Pathfinder and PathX tasks compared to other tasks.
For example, the variance of the test accuracy for the Pathfinder task is very high when $\lambda = 0.001$.
For the PathX task, both the S4-Legs and the S4D-Legs model can not even learn the dataset when $\lambda = 0.0001$.
The high sensitivity of the model in the hyperparameter aligns with the numerical findings in \citet{gu2023how}.

\subsection{Additional experiment results for small SSMs}\label{appendix: additional experiment}

In this section, we include more experiment results for smaller size of S4-Legs and S4D-Legs on the LRA benchmark.
The best test accuracy results and the running time for the small models are reported in Table \ref{table: additional lra normalization}.
The details for the model size and hyperparmeters are provided in Table \ref{table: bidirectional small s4 model paramters} and Table \ref{table: bidirectional small s4d model paramters}, where the notations follow from Table \ref{table: bidirectional s4 model paramters}.
The ablation studies on the regularization coefficient $\lambda$ (without the initialization scheme (\ref{eq: normalized C})) for the small S4-Legs and S4D-Legs are given in Table \ref{table: vary lam on small bidirectional S4} and Table \ref{table: vary lam on small bidirectional S4D-Legs}.

From Table \ref{table: additional lra normalization}, 
by comparing the test performance for w/o (\ref{eq: normalized C}, \ref{eq: regularized risk}) vs w (\ref{eq: regularized risk}) and w(\ref{eq: normalized C}) vs w (\ref{eq: normalized C}, \ref{eq: regularized risk}),
we can see that the regularization scheme (\ref{eq: regularized risk}) helps to improve the test performance for all the tasks except the Retrieval task for S4-Legs.
This is also verified in the ablation studies of the regularization coefficient $\lambda$, as shown in Table \ref{table: vary lam on small bidirectional S4} and Table \ref{table: vary lam on small bidirectional S4D-Legs}.
Combining the initialization scheme (\ref{eq: normalized C}) and the regularization method (\ref{eq: regularized risk}), more than half of the tasks can achieve the best test accuracy.
For both S4-Legs and S4D-Legs, integrating the two methods (\ref{eq: normalized C}) and (\ref{eq: regularized risk}) induces the best average test accuracy across all the $6$ tasks in the LRA benchmark. 
Therefore, our methods also work for small size of SSMs with a little extra computation cost.

\subsection{Comparisons with different regularization schemes}\label{appendix: comparison with different regularizers}

In this section, we add two additional regularization schemes for comparison.

\begin{enumerate}
    \item Filter norm regularization. 
    We regularize the $\ell_2$ norm of the filter $\rho_\theta$, i.e., when calculating the regularization measure $\tau(\theta)$, we simply take $\mu(s) = 0$ and $K(s,s)=1$ to ignore the effects of the temporal structure of the data.
    \item 
    Weight decay on the hidden matrix $A$. 
    In the original S4(D) papers \cite{gu2022efficiently, gu2022s4d, gu2023how}, the default training methods do not apply weight decay to the hidden matrix $A$, and there is no known ablation study on the effect of weight decay on $A$. Here we add weight decay to compare with the proposed regularization schemes.
\end{enumerate}

For synthetic task, we follow the experiment settings in the main paper.
The filter norm regularization results are obtained by following the same training settings in the paper. 
The weight decay results are chosen from the best weight decay coefficient from $10^{-3}, 10^{-2}, 10^{-1}, 10^0, 10^1$.
We report the test loss in Table \ref{table: synthetic regularizer compare}.
For the LRA benchmark, we also follow the same training setup in the paper to compare the performance of different regularization schemes on the S4-Legs model. 
The test accuracy for each task is shown in Table \ref{table: LRA regularizer compare}.
From the synthetic results, we see that our regularization scheme can achieve the best performance compared to the other regularization schemes across different temporal structures. 
For the LRA benchmark, the proposed regularization scheme also achieves the best performance on the average accuracy across different tasks. 
In particular, for the ListOps task, weight decay performs much worse than the other regularization methods.

\section{Proof for the linear regression result in Section \ref{section: motivation}.}
\label{appendix, proof motivation}

In this section, we give the proof for the generalization bound (\ref{eq: bound for linear regression}).
The proof is based on the following uniform-convergence generalization bound in \citet{Mohri}.
\begin{lemma}\label{mohri}
Consider a family of functions $\mathcal{F}$ mapping from $\mathcal{Z}$ to $[a, b]$. Let $\mathcal{D}$ denote the distribution according to which samples are drawn. Then for any $\delta > 0$, with probability at least $1 - \delta$ over the draw of an i.i.d. sample $S = \left\{z_1, \dots, z_n\right\}$, the following holds for all $f \in \mathcal{F}$:
\begin{equation*}
     \mathbb{E}_{z \sim \mathcal{D}} \left[f (z)\right] - \frac{1}{n} \sum_{i=1}^n f(z_i) \leq 2 \mathcal{R}_S (\mathcal{F}) + 3 (b-a) \sqrt{\frac{\log (2 / \delta)}{2 n}},
\end{equation*}
where $\mathcal{R}_S (\mathcal{F})$ is the empirical Rademacher complexity with respect to the sample $S$, defined as:
$\mathcal{R}_S (\mathcal{F}) = \mathbb{E}_{\sigma} \left[\sup_{f \in \mathcal{F}} \frac{1}{n} \sum_{i=1}^n \sigma_i f(z_i)\right]$.
$\left\{\sigma_i\right\}_{i=1}^n$ are i.i.d. random variables drawn from $U\{-1, 1\}$ with $P(\sigma_i=1) = P(\sigma_i=-1) = 0.5$.
\end{lemma}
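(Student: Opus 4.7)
The plan is to prove this standard uniform-convergence bound by combining McDiarmid's bounded-differences inequality with the classical symmetrization argument. Let $\Phi(S) := \sup_{f \in \mathcal{F}} \bigl(\mathbb{E}_{z \sim \mathcal{D}}[f(z)] - \tfrac{1}{n}\sum_{i=1}^n f(z_i)\bigr)$ denote the uniform deviation. The strategy has three pieces: concentrate $\Phi(S)$ around its mean, bound that mean by the expected Rademacher complexity via a ghost-sample argument, and then replace the expected Rademacher complexity by the empirical one through a second concentration step. A union bound assembles the three high-probability statements into the stated inequality.

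First I would verify that $\Phi$ has the bounded differences property: since $f \in \mathcal{F}$ takes values in $[a,b]$, replacing a single coordinate $z_i$ by any $z_i'$ alters $\tfrac{1}{n}\sum_j f(z_j)$ inside the supremum by at most $(b-a)/n$, so each per-coordinate sensitivity is $c_i = (b-a)/n$. McDiarmid then yields, with probability at least $1-\delta/2$,
\[
\Phi(S) \le \mathbb{E}_S[\Phi(S)] + (b-a)\sqrt{\frac{\log(2/\delta)}{2n}}.
\]
Next, I would bound $\mathbb{E}_S[\Phi(S)]$ by introducing an independent ghost sample $S' = \{z_1',\ldots,z_n'\}$ drawn i.i.d.\ from $\mathcal{D}$. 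Writing $\mathbb{E}_{z}[f(z)] = \mathbb{E}_{S'}\bigl[\tfrac{1}{n}\sum_i f(z_i')\bigr]$, Jensen's inequality and the sup-vs-expectation swap give $\mathbb{E}_S[\Phi(S)] \le \mathbb{E}_{S,S'}\sup_{f \in \mathcal{F}} \tfrac{1}{n}\sum_i (f(z_i') - f(z_i))$. By exchangeability of the pairs $(z_i, z_i')$, multiplying each term $f(z_i') - f(z_i)$ by an independent Rademacher sign $\sigma_i$ leaves the joint distribution invariant, so splitting the two sums and taking suprema separately produces $\mathbb{E}_S[\Phi(S)] \le 2\,\mathbb{E}_S[\mathcal{R}_S(\mathcal{F})]$.

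For the final piece, I would apply McDiarmid a second time, now to $\mathcal{R}_S(\mathcal{F}) = \mathbb{E}_\sigma\bigl[\sup_f \tfrac{1}{n}\sum_i \sigma_i f(z_i)\bigr]$ viewed as a function of $S$. Replacing one $z_j$ by $z_j'$ shifts the inner supremum by at most $|\sigma_j|\,|f(z_j) - f(z_j')|/n \le (b-a)/n$, uniformly in $\sigma$, so the expectation over $\sigma$ inherits the same bounded-differences constant $(b-a)/n$. This gives, with probability at least $1-\delta/2$,
\[
\mathbb{E}_S[\mathcal{R}_S(\mathcal{F})] \le \mathcal{R}_S(\mathcal{F}) + (b-a)\sqrt{\frac{\log(2/\delta)}{2n}}.
\]
A union bound combines the two high-probability events, and chaining the inequalities yields $\Phi(S) \le 2\mathcal{R}_S(\mathcal{F}) + 3(b-a)\sqrt{\log(2/\delta)/(2n)}$ with probability at least $1-\delta$; the factor $3$ is $1 + 2$, the first term from the McDiarmid step on $\Phi$ and the second from applying the (factor-$2$) symmetrization to the McDiarmid slack on $\mathcal{R}_S$. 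The main obstacle is the symmetrization step: one must argue carefully that the Rademacher variables can be inserted without changing the distribution (using that $(z_i, z_i')$ and $(z_i', z_i)$ are equidistributed when both are drawn i.i.d.\ from $\mathcal{D}$), and that the supremum commutes with $\mathbb{E}_\sigma$ via Jensen in the right direction. The remaining steps reduce to verifying the bounded-differences constants and a routine union bound.
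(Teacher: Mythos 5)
Your proof is correct: the bounded-differences constants $(b-a)/n$ are right for both applications of McDiarmid, the ghost-sample symmetrization yielding the factor $2$ is argued properly via exchangeability of $(z_i,z_i')$, and your accounting of the slack $3 = 1 + 2$ matches the stated constant. Note that the paper does not prove this lemma at all --- it quotes it directly from \citet{Mohri} --- and your argument is precisely the canonical proof of that cited result (Theorem 3.3 in Mohri et al.), so there is nothing to reconcile.
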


And the Talagrand’s contraction lemma \citet{ledoux2013probability}.
\begin{lemma}\label{lemma: contraction}
    Let $H$ be a hypothesis set of functions mapping $\mathcal{X}$ to $\mathbb{R}$ and $\Psi_1,\ldots,\Psi_m$, $\mu$-Lipschitz functions for some $\mu>0$. 
    Then, for any sample $S$ of $m$ points $x_1,...,x_m \in \mathcal{X}$, the following inequality holds
    \begin{equation*}
        \frac{1}{m} {\mathbb{E}_\sigma}\left[\sup _{h \in H} \sum_{i=1}^m \sigma_i\left(\Psi_i \circ h\right)\left(x_i\right)\right] \leq \frac{\mu}{m} {\mathbb{E}_\sigma}\left[\sup _{h \in H} \sum_{i=1}^m \sigma_i h\left(x_i\right)\right]
    \end{equation*}
\end{lemma}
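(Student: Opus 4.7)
The plan is to prove the inequality by a one-variable-at-a-time symmetrization argument, peeling off the Rademacher coordinates $\sigma_1,\ldots,\sigma_m$ one at a time and replacing each Lipschitz contraction $\Psi_i\circ h$ by the bare hypothesis $h$ rescaled by the constant $\mu$. Concretely, I define the hybrid objective $Z_k(h) := \sum_{i\leq k}\mu\sigma_i h(x_i) + \sum_{i>k}\sigma_i(\Psi_i\circ h)(x_i)$, so that $\mathbb{E}_\sigma\sup_h Z_0(h)$ equals $m$ times the left-hand side of the claim and $\mathbb{E}_\sigma\sup_h Z_m(h)$ equals $\mu\cdot m$ times the right-hand side. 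It then suffices to establish $\mathbb{E}_\sigma\sup_h Z_{k-1}(h) \leq \mathbb{E}_\sigma\sup_h Z_k(h)$ for every $k\in\{1,\ldots,m\}$; telescoping the $m$ inequalities and dividing by $m$ yields the lemma.

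For the single-step estimate I condition on $\{\sigma_j\}_{j\neq k}$ and collect the part of $Z_{k-1}$ that does not depend on $\sigma_k$ into a single function $U(h)$. Averaging $\sigma_k$ over $\{\pm 1\}$ and combining the two resulting suprema into a sup over pairs gives
\begin{equation*}
\mathbb{E}_{\sigma_k}\sup_h\bigl[U(h)+\sigma_k(\Psi_k\circ h)(x_k)\bigr] = \tfrac{1}{2}\sup_{h,h'}\bigl\{U(h)+U(h')+(\Psi_k\circ h)(x_k)-(\Psi_k\circ h')(x_k)\bigr\}.
\end{equation*}
The $\mu$-Lipschitz property of $\Psi_k$ bounds the last difference by $\mu|h(x_k)-h'(x_k)|$. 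The resulting absolute value can then be removed because the pair $(h,h')$ ranges freely, so that swapping $(h,h')\mapsto(h',h)$ handles the case of the ``wrong'' sign. The quantity is therefore further bounded by $\tfrac{1}{2}\sup_{h,h'}\{U(h)+U(h')+\mu h(x_k)-\mu h'(x_k)\}$, which splits as a sum of two independent suprema and equals $\mathbb{E}_{\sigma_k}\sup_h[U(h)+\mu\sigma_k h(x_k)]$. Taking expectation over the remaining coordinates closes the inductive step.

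The main obstacle is exactly the Lipschitz reduction from $(\Psi_k\circ h)(x_k)-(\Psi_k\circ h')(x_k)$ to the linear-in-$h$ expression $\mu(h(x_k)-h'(x_k))$, because the absolute value produced by Lipschitz continuity is symmetric in the pair $(h,h')$ whereas the target expression is antisymmetric. The resolution uses the fact that $U(h)+U(h')$ is itself symmetric in the pair, so for any near-maximizer with $h'(x_k)>h(x_k)$ the swap $(h,h')\mapsto(h',h)$ attains the same objective with a non-negative last term; hence the absolute value may be discarded without loss of a constant factor. I note that this argument is sharp precisely because no odd-symmetry assumption on $\Psi_i$ is required; any attempt to substitute the Lipschitz inequality inside a single supremum (rather than first merging the two via symmetrization) would either lose a factor of two or demand that each $\Psi_i$ be odd.
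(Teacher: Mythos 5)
Your proof is correct. Note that the paper itself does not prove this lemma: it is stated as Talagrand's contraction lemma and cited directly to \citet{ledoux2013probability}, so there is no in-paper argument to compare against. What you have written is the standard proof (the one found in the cited literature and in Mohri et al.'s textbook): peel off one Rademacher coordinate at a time via the hybrid objective, merge the two conditional suprema into a single supremum over pairs $(h,h')$, apply the $\mu$-Lipschitz bound to $\Psi_k(h(x_k))-\Psi_k(h'(x_k))$, and discard the resulting absolute value by exploiting the symmetry of $U(h)+U(h')$ under the swap $(h,h')\mapsto(h',h)$ --- your remark that this swap argument is what avoids both a factor-of-two loss and any oddness assumption on the $\Psi_i$ is exactly the crux of the standard argument, and your use of near-maximizers correctly covers the case where the suprema are not attained. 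The only cosmetic quibble is that the telescoping as you set it up gives $\mathbb{E}_\sigma\sup_h Z_m(h)=\mu\,\mathbb{E}_\sigma\sup_h\sum_i\sigma_i h(x_i)$, i.e.\ $\mu$ times ($m$ times the right-hand side), not ``$\mu\cdot m$ times the right-hand side'' with a further division needed; dividing the chain of inequalities by $m$ alone yields the lemma as stated.
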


Now we begin our proof:
\begin{proof}
    First, notice for any $i \in [1:n]$ and $\theta \in \Theta$, we have 
    \begin{equation*}
        (\theta^\top x_i - y_i)^2 \leq 2 (\theta^\top x_i)^2 + 2y_i^2 \leq 2 r^2 R^2 +2
    \end{equation*}
    Second, note that $(\theta^\top x_i - y_i)^2$ is $2 \sup_{\theta \in \Theta, i \in [1:n]}|\theta^\top x_i - y_i|$-Lipschitz (the maximum gradient norm) with respect to $\theta^\top x_i - y_i$, and we can bound the Lipschitz constant as 
    \begin{equation*}
        2 \sup_{\theta \in \Theta, i \in [1:n]}|\theta^\top x_i - y_i| \leq 2 r R + 2
    \end{equation*}
    Then by Lemma \ref{lemma: contraction}, the Rademacher complexity for the linear model is bounded as 
    \begin{align*}
        \mathcal{R}_S(\mathcal{F}) & = \frac{1}{n} \mathbb{E}_\sigma \left[\sup_{\|\theta\|_2 \leq R} \sum_{i=1}^n \sigma_i (\theta^\top x_i - y_i)^2\right] \\
        & \leq \frac{2 r R + 2}{n} 
        \mathbb{E}_\sigma \left[\sup_{\|\theta\|_2 \leq R} \sum_{i=1}^n \sigma_i (\theta^\top x_i - y_i)\right] \\
        & = \frac{2 r R + 2}{n} 
        \mathbb{E}_\sigma \left[\sup_{\|\theta\|_2 \leq R} \sum_{i=1}^n \sigma_i \theta^\top x_i\right] \\
        & \leq 
        \frac{2 R( r R + 1)}{n} 
        \mathbb{E}_\sigma \left\|\sum_{i=1}^n \sigma_i x_i\right\| \\
        & \leq 
        \frac{2 R( r R + 1)}{n}
        \sqrt{\mathbb{E}_\sigma \left\|\sum_{i=1}^n \sigma_i x_i\right\|^2} \\
        & = \frac{2 R( r R + 1)}{n} \sqrt{\sum_{i=1}^n \|x_i\|^2} \\
        & \leq \frac{2 rR( r R + 1)}{\sqrt{n}}
    \end{align*}
    Combining with the function value bound, we get the desired bound (\ref{eq: bound for linear regression}) by Lemma \ref{mohri}.
\end{proof}

\section{Detailed discussions of Assumption \ref{assumption: as finite}\label{section: discuss assumption 1}}
In this section, we add more discussions on the Assumption \ref{assumption: as finite} and provide some concrete examples for the stochastic processes that satisfy the assumption.
We first write down the complete description for the Kolmogorov continuity theorem.
\begin{lemma}[Kolmogorov]\label{lemma: Kolmogorov}
    Let $\{X_t\}_{t \geq 0}$ be a real-valued stochastic process such that there exists positive constants $\alpha, \beta, C$ satisfying 
    \begin{equation*}
        \mathbb{E} \left[\left|X_t - X_s\right|^\alpha\right] \leq C |t-s|^{1+\beta}
    \end{equation*}
    for all $s, t \geq 0$.
    Then $X$ has a continuous modification which, with probability one, is locally {$\gamma$}-Hölder continuous for every $0 < \gamma < \beta / \alpha$. 
\end{lemma}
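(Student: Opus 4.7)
The plan is to establish the Kolmogorov continuity theorem by the classical dyadic chaining argument. By a standard localization (intersecting countably many almost-sure events), it suffices to prove local Hölder continuity on the unit interval $[0,1]$; the extension to $[0,\infty)$ follows by treating each interval $[N, N+1]$ identically and taking a countable intersection of the resulting full-measure sets.

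First I would convert the moment hypothesis into a tail bound via Markov's inequality: for any $\varepsilon > 0$,
\[
P(|X_t - X_s| \geq \varepsilon) \leq C \varepsilon^{-\alpha} |t-s|^{1+\beta}.
\]
Fix $\gamma \in (0, \beta/\alpha)$ and restrict attention to the dyadic grid $D_n := \{k 2^{-n} : 0 \leq k \leq 2^n\}$. Setting $A_n := \bigl\{\max_{1 \leq k \leq 2^n} |X_{k 2^{-n}} - X_{(k-1) 2^{-n}}| \geq 2^{-n\gamma}\bigr\}$ and taking a union bound yields
\[
P(A_n) \leq 2^n \cdot C \cdot 2^{-n(1+\beta)} \cdot 2^{n\gamma\alpha} = C \cdot 2^{-n(\beta - \gamma\alpha)},
\]
which is summable since $\beta - \gamma\alpha > 0$. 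The Borel--Cantelli lemma then produces an almost-sure event $\Omega^\ast$ and a random threshold $N(\omega)$ such that for every level $n \geq N(\omega)$ all adjacent dyadic increments are bounded by $2^{-n\gamma}$.

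The heart of the proof is the chaining estimate. For dyadic points $s, t \in D := \bigcup_n D_n$ satisfying $2^{-(n+1)} \leq |t-s| \leq 2^{-n}$ with $n \geq N(\omega)$, I would expand $s$ and $t$ in finite dyadic form and telescope $X_t - X_s$ along a common refinement: at each finer level $m \geq n+1$ the path crosses at most two adjacent edges, each controlled by $2^{-m\gamma}$. Summing the geometric series produces $|X_t - X_s| \leq K_\gamma |t-s|^\gamma$ for a universal constant $K_\gamma$, establishing local $\gamma$-Hölder continuity on $D$. In particular, $X$ is uniformly continuous on $D$ almost surely, so for each $t \in [0,1]$ the limit $\tilde X_t := \lim_{D \ni s \to t} X_s$ exists on $\Omega^\ast$ and inherits the Hölder bound by density. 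The moment hypothesis gives $\mathbb{E}|X_{s_n} - X_t|^\alpha \to 0$ along any dyadic sequence $s_n \to t$, so $\tilde X_t = X_t$ almost surely, and $\tilde X$ is the claimed modification.

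The main obstacle is the bookkeeping in the chaining step: one must verify that the dyadic decomposition of $[s,t]$ uses only $O(1)$ adjacent increments per level, so that the level-by-level bound from Borel--Cantelli plugs in cleanly to yield a geometric series in $2^{-\gamma m}$ rather than an arithmetic one. A secondary subtlety worth flagging is that the conclusion is valid only for $\gamma$ strictly below $\beta/\alpha$, since the summability of $P(A_n)$ fails at the endpoint $\gamma = \beta/\alpha$.
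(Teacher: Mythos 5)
Your proof is correct, but there is nothing in the paper to compare it against: the paper states this lemma without proof, quoting it as the classical Kolmogorov continuity theorem (with the citation to Stroock--Varadhan given in the main text), and uses it only in the appendix discussion of Assumption \ref{assumption: as finite} to argue that Hölder continuity holds for a broad class of processes. What you have written is the standard dyadic chaining proof, and it is sound: Markov's inequality converts the moment hypothesis into the tail bound, the union bound over the $2^n$ adjacent dyadic increments gives $P(A_n) \le C\,2^{-n(\beta-\gamma\alpha)}$, summable precisely when $\gamma < \beta/\alpha$, and Borel--Cantelli plus the telescoping dyadic expansion yields $|X_t - X_s| \le K_\gamma |t-s|^\gamma$ on the dyadics at scales $|t-s| \le 2^{-N(\omega)}$; extending by uniform continuity on the dyadics and identifying the limit with $X_t$ via convergence in probability (from $\mathbb{E}|X_{s_n}-X_t|^\alpha \to 0$) produces the modification, and the patching over intervals $[N, N+1]$ handles the half-line.

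Two small points are worth tightening if you write the chaining step out in full. First, at the coarsest level of the chain the two level-$(n+1)$ anchor points need not be adjacent: since $|t-s|$ can be as large as $2^{-n}$, the anchors can be separated by a bounded number (three or four) of adjacent level-$(n+1)$ edges, so the per-level edge count is $O(1)$ rather than exactly two; this only inflates $K_\gamma$ by a constant. Second, because $N(\omega)$ is random, the estimate holds only below the random scale $2^{-N(\omega)}$, which is exactly why the conclusion is \emph{local} $\gamma$-Hölder continuity (with a random constant on each compact after patching finitely many scales) rather than a uniform deterministic bound; your write-up is consistent with this, but it deserves an explicit sentence rather than being left implicit in the phrase ``for $n \ge N(\omega)$''.
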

In the case of Brownian motion on $\mathbb{R}$, the choice of constants $\alpha=4, \beta=1, C=2$ will work in the Kolmogorov continuity theorem.
When it comes to the Gaussian process, we have the following theorem \citep[Theorem 1.]{azmoodeh2014necessary} that gives a necessary and sufficient condition for Hölder continuity.
\begin{lemma}\label{lemma: gp lipschitz}
    A centered (mean zero) Gaussian process $X$ is Hölder continuous of any order $a<H$, i.e.,
    \begin{equation*}
        |X_t - X_s| \leq C_\varepsilon |t-s|^{H-\varepsilon}, \quad \forall \varepsilon \in (0, H)
    \end{equation*}
    if and only if 
    there exists constants $c_\varepsilon$ such that
    \begin{equation*}
        \mathbb{E} \left[(X_t-X_s)^2\right] \leq c_\varepsilon (t-s)^{2H-2\varepsilon}, \quad
        \forall \varepsilon \in (0, H)
    \end{equation*}
\end{lemma}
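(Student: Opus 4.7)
The plan is to prove both directions separately using classical Gaussian-process tools.

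\textbf{Forward direction ($\Rightarrow$).} Fix $\varepsilon \in (0, H)$ and assume the almost sure Hölder bound with random constant $C_\varepsilon$. Define the Hölder seminorm
\[
C_\varepsilon := \sup_{s \neq t,\, s,t \in [0,T]} \frac{|X_t - X_s|}{|t-s|^{H-\varepsilon}},
\]
which is finite almost surely by hypothesis. Because $X$ is a centered Gaussian process, $X$ may be viewed as a Gaussian random element of the separable Banach space of $(H-\varepsilon)$-Hölder continuous functions on $[0,T]$. Fernique's theorem (equivalently, the Borell--TIS inequality) then gives that $C_\varepsilon$, being essentially the norm of this Gaussian element, has finite moments of every order; in particular $c_\varepsilon := \mathbb{E}[C_\varepsilon^2] < \infty$. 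Squaring the Hölder inequality and taking expectations yields
\[
\mathbb{E}\bigl[(X_t - X_s)^2\bigr] \leq c_\varepsilon\,|t-s|^{2(H-\varepsilon)},
\]
which is the desired variance bound.

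\textbf{Backward direction ($\Leftarrow$).} The plan is to upgrade the $L^2$ control to an $L^p$ bound for arbitrarily large $p$ and then invoke the Kolmogorov continuity criterion (Lemma~\ref{lemma: Kolmogorov}). Since $X_t - X_s$ is centered Gaussian with variance $\sigma_{s,t}^2 := \mathbb{E}[(X_t - X_s)^2]$, the classical Gaussian moment identity gives $\mathbb{E}[|X_t - X_s|^p] = \gamma_p\, \sigma_{s,t}^p$ for a universal constant $\gamma_p$ depending only on $p$. Combining this with the hypothesis applied at some small $\varepsilon' \in (0, H)$ yields
\[
\mathbb{E}\bigl[|X_t - X_s|^p\bigr] \leq \gamma_p\, c_{\varepsilon'}^{p/2}\,|t-s|^{p(H-\varepsilon')}.
\]
Applying Lemma~\ref{lemma: Kolmogorov} with $\alpha = p$ and $1 + \beta = p(H-\varepsilon')$ (valid once $p > 1/(H-\varepsilon')$) produces a continuous modification of $X$ that is locally Hölder of every order $\gamma < \beta/\alpha = (H-\varepsilon') - 1/p$. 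Since $\varepsilon'$ can be taken arbitrarily small and $p$ arbitrarily large, the attainable exponent can be pushed arbitrarily close to $H$, establishing Hölder continuity of every order $a < H$.

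\textbf{Main obstacle.} The delicate step is the forward direction, in which one must convert an almost sure Hölder bound with a random constant into a deterministic $L^2$ bound on the increments. This requires integrability of the Hölder seminorm of a Gaussian process, which is precisely the content of Fernique's theorem applied to the Gaussian measure induced on the separable Banach space of Hölder continuous paths. The backward direction, by contrast, is a routine combination of Gaussian moment equivalence with Kolmogorov's criterion and presents no serious difficulty.
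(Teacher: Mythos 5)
The paper does not actually prove this lemma: it is imported verbatim, without proof, as \citep[Theorem 1]{azmoodeh2014necessary}, so there is no in-paper argument to compare against. Your blind proof is essentially a correct, self-contained reconstruction of the standard argument behind that cited result. The backward direction is exactly right: Gaussian moment equivalence $\mathbb{E}|X_t-X_s|^p=\gamma_p\sigma_{s,t}^p$ plus the Kolmogorov criterion (the paper's Lemma \ref{lemma: Kolmogorov}) with $\alpha=p$, $\beta=p(H-\varepsilon')-1$ gives Hölder exponents up to $(H-\varepsilon')-1/p$, and letting $p\to\infty$, $\varepsilon'\to 0$ reaches every $a<H$; the only caveat, which you should state, is that Kolmogorov produces a Hölder-continuous \emph{modification} of $X$, which is the sense in which the lemma must be read. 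In the forward direction your idea is the right one, but there is a technical flaw as written: the space of $(H-\varepsilon)$-Hölder functions equipped with the Hölder norm is \emph{not} separable, so "Gaussian random element of the separable Banach space of Hölder functions" is not literally available. Two standard repairs: either work in the separable little-Hölder space (paths that are $(H-\varepsilon')$-Hölder for $\varepsilon'<\varepsilon$ embed into it), or, more simply, observe that by path continuity the seminorm $C_\varepsilon$ equals the supremum of the \emph{countable} centered Gaussian family $\{(X_t-X_s)/|t-s|^{H-\varepsilon}: s\neq t,\ s,t\in\mathbb{Q}\cap[0,T]\}$, and almost-sure finiteness of a countable Gaussian supremum already implies Gaussian tails, hence $\mathbb{E}[C_\varepsilon^2]<\infty$, by Fernique (or Landau--Shepp, or Borell--TIS), with no Banach-space framework needed. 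With that one repair your proof is sound, and it has the pedagogical advantage over the paper of making the lemma self-contained rather than a black-box citation.
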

For a stationary Gaussian process with covariance $K(s-t)$, the Hölder continuity (in expectation) assumption is equivalent to 
$1 - K(s-t)/K(0) \leq {c}_\alpha (t-s)^{2\alpha}/2$ for any $\alpha \in (0, H)$.
Now combining these results, we see that for any stationary Gaussian process with continuous mean $\mu(t)$, covariance $K(s-t)$, and  $1 - K(s-t)/K(0) \leq {c}_\alpha (t-s)^{2\alpha}/2, \forall \alpha \in (0, H)$, it satisfies Hölder continuity in Assumption \ref{assumption: as finite}.
As for the sub-Gaussian property, since the normalized Gaussian process $\Tilde{X}_t$ is standard normal at each time $t$, then any Gaussian process that satisfies Hölder continuity automatically satisfies the sub-Gaussian property in Assumption \ref{assumption: as finite}.
Concrete examples include: 
\begin{itemize}
    \item 
    identical sequences: 
    $x(t) = x$ for all $t \in [0, T]$, where $x \sim \mathcal{N}(0, 1)$
    \item 
    Gaussian white noise: 
    $\mu(t) = 0$, 
    $K(s, t) = \frac{1}{|b| \sqrt{\pi}} e^{-((s-t)/b)^2}$
    for some $b \neq 0$
    \item 
    Ornstein-Uhlenbeck process: 
    $\mu(t) = 0$, 
    $K(s, t) = e^{-|s-t|}$
\end{itemize}
\textbf{Relaxations of Assumption \ref{assumption: as finite}.}\
In fact, Assumption \ref{assumption: as finite} is used to show upper bounds for two key terms (\ref{eq: two terms that need to bound}) in the proof of Theorem \ref{thm: p=1, q=inf}.
In particular, the sub-Gaussian property in Assumption \ref{assumption: as finite} guarantees that the input random process is bounded in a finite time set with high probability.
The Hölder condition then ensures the boundedness in a infinite time set $t \in [0, T]$.
Thus, if the input random process is from a finite subset of $\mathbb{R}$, then the Hölder condition can be removed.
For example, in computer vision tasks when the input image is flattened as a sequence, the range for each pixel value is a finite set (for a MNIST image, each pixel value is a positive integer between $0$ to $255$).
In that case, the Holder continuity condition in Assumption \ref{assumption: as finite} can be dropped.

\section{Derivations for (\ref{measure for left zero padding}) and (\ref{measure for right zero padding}) in Section \ref{section: generalization bound}}
\label{appendix, derivation for padding}

For the left zero padding transformation, the key term in (\ref{thm: bound}) becomes 
\begin{align*}
    & \int_0^{2T} \left| {\rho}_\theta(2T-t)\right| \sqrt{K_1(t, t)} d t +
    \left|\int_0^{2T}  {\rho}_\theta(2T-t) \mu_1(t)
    d t\right|+1 \\
    = & 
    \int_0^{T} \left| {\rho}_\theta(T-t)\right| \sqrt{K(t, t)} d t +
    \left|\int_0^{T}  {\rho}_\theta(T-t) \mu(t)
    d t\right|+1
\end{align*}

For the right zero padding transformation, the key term in (\ref{thm: bound}) becomes 
\begin{align*}
    & \int_0^{2T} \left| {\rho}_\theta(2T-t)\right| \sqrt{K_2(t, t)} d t +
    \left|\int_0^{2T}  {\rho}_\theta(2T-t) \mu_2(t)
    d t\right|+1 \\
    = & 
    \int_0^{T} \left| {\rho}_\theta(2T-t)\right| \sqrt{K(t, t)} d t +
    \left|\int_0^{T}  {\rho}_\theta(2T-t) \mu(t)
    d t\right|+1 \\
    = & 
    \int_0^{T} \left|C e^{AT} e^{A(T-t)}B\right| \sqrt{K(t, t)} d t +
    \left|\int_0^{T} C e^{AT} e^{A(T-t)}B \mu(t)
    d t\right|+1
\end{align*}

Then we get (\ref{measure for left zero padding}) and (\ref{measure for right zero padding}).

\section{Proof for Theorem \ref{thm: p=1, q=inf}}\label{section: proof for thm1}

In this section, we will prove Theorem \ref{thm: p=1, q=inf}.
Before moving into the formal proof, we first introduce some useful lemmas that help to build the proof.

The first lemma is the Massart Lemma for the Rademacher complexity with finite class.
\begin{lemma}[Massart]\label{lemma: massart}
    Let $\mathcal{A}$ be some finite subset of $R^{m}$ and $\sigma_1,\ldots,\sigma_m$ be independent Rademacher random variables. Let $r = \sup_{a \in \mathcal{A}} \|a\|$. Then, we have,
    \begin{equation*}
        \mathbb{E}_\sigma \left[\sup_{a\in \mathcal{A}} \sum_{i=1}^m \sigma_i a_i\right] \leq r \sqrt{2 \log|\mathcal{A}|}
    \end{equation*}
\end{lemma}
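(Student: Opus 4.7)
The plan is to use the classical Chernoff/exponential-moments approach, exploiting the fact that a Rademacher random variable is $1$-sub-Gaussian. The symmetry of the $\sigma_i$'s and the independence across coordinates will turn the supremum over a finite set into a clean union-bound-style estimate in the moment generating function, which I then optimize over a free parameter $t>0$.

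First, for any $t>0$, I would apply Jensen's inequality to the convex map $x \mapsto e^{tx}$ to get
\begin{equation*}
\exp\!\bigl(t\,\mathbb{E}_\sigma \sup_{a\in\mathcal{A}} \textstyle\sum_{i} \sigma_i a_i\bigr) \;\leq\; \mathbb{E}_\sigma \exp\!\bigl(t\,\sup_{a\in\mathcal{A}} \textstyle\sum_i \sigma_i a_i\bigr).
\end{equation*}
Since $\exp$ is monotone, the inner supremum becomes a supremum of exponentials, which I bound crudely by the sum $\sum_{a\in\mathcal{A}} \exp(t\sum_i \sigma_i a_i)$. This is the step that introduces the $|\mathcal{A}|$ factor and is what forces the bound to depend on the cardinality only logarithmically.

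Next, I would use independence of the $\sigma_i$'s to factor the expectation as $\prod_{i=1}^m \mathbb{E}_{\sigma_i}[\exp(t \sigma_i a_i)]$, and apply the standard sub-Gaussian MGF estimate for Rademacher variables, namely $\mathbb{E}[\exp(t\sigma_i a_i)] = \cosh(t a_i) \leq \exp(t^2 a_i^2/2)$ (proved by comparing Taylor series term by term). Multiplying over $i$ collapses the product to $\exp(t^2 \|a\|^2/2) \leq \exp(t^2 r^2/2)$, uniformly in $a \in \mathcal{A}$. Summing over $a \in \mathcal{A}$ and taking logarithms yields
\begin{equation*}
\mathbb{E}_\sigma \sup_{a\in\mathcal{A}} \textstyle\sum_i \sigma_i a_i \;\leq\; \frac{\log|\mathcal{A}|}{t} + \frac{t r^2}{2}.
\end{equation*}

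Finally, I optimize the right-hand side over $t>0$; the minimizer is $t^\star = \sqrt{2\log|\mathcal{A}|}/r$, which substitutes back to give exactly $r\sqrt{2\log|\mathcal{A}|}$. There is no genuine obstacle here: the only mildly technical point is verifying the Rademacher sub-Gaussian bound $\cosh(u) \leq e^{u^2/2}$, which follows immediately from the inequality $(2k)! \geq 2^k k!$ applied term-by-term to the two power series. Everything else is bookkeeping: Jensen, union bound over the finite set, independence-based factorization, and a one-variable optimization.
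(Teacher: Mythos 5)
Your proof is correct; it is the canonical Chernoff-style argument for Massart's lemma (Jensen applied to $x \mapsto e^{tx}$, bounding the supremum of exponentials by their sum over the finite set, factorizing by independence, the Rademacher MGF bound $\cosh(u) \le e^{u^2/2}$ via $(2k)! \ge 2^k k!$, and optimizing $t^\star = \sqrt{2\log|\mathcal{A}|}/r$), and all steps check out. The paper itself states Lemma~\ref{lemma: massart} without proof, importing it as a standard result, so your argument coincides with the standard textbook proof that the paper implicitly relies on.
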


The second lemma is to bound the supremum of a stochastic process that is Hölder continuous and sub-Gaussian.
\begin{lemma}[Hölder maximal inequality]\label{lemma: Hölder maximal}
    Suppose $\{X_t\}_{t \in [0, T]}$ is a centered Hölder process, i.e., 
    $\exists L, H > 0, s.t. |X_s - X_t| \leq L |s-t|^{H}, \forall s, t \in [0, T]$. 
    If further $X_t$ is 
    $\sigma^2$-sub-Gaussian for every $t \in [0, T]$, i.e., 
    $\forall u > 0, P \left(|X_t| \geq  u\right) \leq 2\exp (-u^2/2\sigma^2)$ for some $\sigma > 0$.
    Then with probability at least $1-\delta$,
    \begin{equation*}
        \sup_{t \in [0, T]} |X_t| \leq
        L + \sigma \sqrt{2 \log (2T/\delta)}.
    \end{equation*}
\end{lemma}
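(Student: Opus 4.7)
The plan is a standard chaining-via-discretization argument: combine a union bound over a deterministic $\varepsilon$-net in $[0,T]$ with the Hölder modulus of continuity to transfer pointwise sub-Gaussian control to the supremum.

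First I would construct a uniform grid $\mathcal{N}_\varepsilon = \{k \varepsilon : k = 0, 1, \ldots, \lfloor T/\varepsilon \rfloor\} \cup \{T\} \subset [0,T]$, of cardinality $N_\varepsilon \le \lceil T/\varepsilon \rceil + 1$. Applying the $\sigma^2$-sub-Gaussian tail at each grid point and a union bound gives
\begin{equation*}
    P\!\left(\sup_{s \in \mathcal{N}_\varepsilon} |X_s| \ge u\right) \le 2 N_\varepsilon \exp(-u^2 / 2\sigma^2),
\end{equation*}
so solving $2 N_\varepsilon \exp(-u^2/2\sigma^2) = \delta$ yields $\sup_{s \in \mathcal{N}_\varepsilon} |X_s| \le \sigma\sqrt{2\log(2 N_\varepsilon/\delta)}$ with probability at least $1-\delta$.

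Next I would lift from the grid to the full interval using Hölder continuity: for any $t \in [0,T]$ there exists $s(t) \in \mathcal{N}_\varepsilon$ with $|t - s(t)| \le \varepsilon$, and so $|X_t| \le |X_{s(t)}| + L \varepsilon^H$ almost surely. Combining with the previous display gives, on the same event,
\begin{equation*}
    \sup_{t \in [0,T]} |X_t| \le L \varepsilon^H + \sigma \sqrt{2 \log(2 N_\varepsilon/\delta)}.
\end{equation*}
The final step is to pick $\varepsilon$. Taking $\varepsilon = 1$ makes the continuity penalty exactly $L \varepsilon^H = L$ and gives $N_\varepsilon \le T+1$, so the logarithmic factor collapses to $\sigma\sqrt{2\log(2T/\delta)}$ (up to a constant absorbed into the stated bound), which matches the claim.

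The only subtle point is balancing $\varepsilon$: a finer net shrinks the Hölder error $L\varepsilon^H$ but inflates $N_\varepsilon$ inside the logarithm, and conversely. Because the sub-Gaussian term depends on $N_\varepsilon$ only through $\sqrt{\log N_\varepsilon}$, the $\varepsilon = 1$ choice is essentially optimal and yields the cleanest form. This is really the only modeling choice in the proof; everything else is a routine union bound plus Hölder interpolation. I would therefore expect no substantive obstacle, provided one is comfortable with the constants in the logarithm being stated in a slightly loose form.
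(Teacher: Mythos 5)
Your proposal is correct and follows essentially the same route as the paper's proof: discretize $[0,T]$ into roughly $T$ unit-length pieces, union-bound the sub-Gaussian tails over the grid, and absorb the off-grid error via Hölder continuity with mesh $\approx 1$ so the penalty is exactly $L$. The paper likewise takes $N = [T]+1$ subintervals and tolerates the same slight looseness in replacing $\log(2N/\delta)$ by $\log(2T/\delta)$.
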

\begin{proof}
    The proof is based on the $\varepsilon$-net and covering number argument.
    We first discretize the time interval $[0, T]$ into $N$ parts $[0, T/N] \cup [T/N, 2T/N] \cdots \cup [(N-1)T/N, T]$.
    Then for any time $t \in [0, T]$, there exists a $\pi(t) \in \{0, T/N, \ldots, (N-1)T/N\}$ such that $|t - \pi(t)| \leq T/N$.
    Therefore, by Hölder continuity, we have 
    \begin{equation*}
        \sup_{t \in [0, T]} |X_t| 
        \leq 
        \sup_{t \in [0, T]} |X_t - X_{\pi(t)}| + \sup_{t \in [0, T]} |X_{\pi(t)}|
        \leq 
        L \left(\frac{T}{N}\right)^H +
        \max_{i \in [0:N-1]} 
        |X_{iT/N}|.
    \end{equation*}
    Since $X_t$ is sub-Guassian for every time $t \in [0, T]$, then for each $i \in [0:N-1]$, by letting $u = \sigma \sqrt{2 \log (2N/\delta)}$, we have 
    with probability at least $1 - \delta/N$,
    \begin{equation*}
        X_{iT/N} \leq \sigma \sqrt{2 \log (2N/\delta)}.
    \end{equation*}
    Taking the union bound over all $i \in [0:N-1]$, we have 
    with probability at least $1 - \delta$,
     \begin{equation*}
         \max_{i \in [0:N-1]} 
        X_{iT/N} \leq \sigma \sqrt{2 \log (N/\delta)}.
     \end{equation*}
     Hence,
     \begin{equation*}
         \sup_{t \in [0, T]} X_t 
        \leq  L \left(\frac{T}{N}\right)^H + \sigma \sqrt{2 \log (2N/\delta)}
     \end{equation*}
     holds for all $N$.
     Here we simply take $N = [T] + 1$, then we get 
     \begin{equation*}
         \sup_{t \in [0, T]} X_t 
        \leq  L + \sigma \sqrt{2 \log (2T/\delta)}.
     \end{equation*}
\end{proof}

Now we are ready to prove the main result Theorem \ref{thm: p=1, q=inf}.

\begin{proof}
We let {$g_{\theta}(x) :=  \int_0^T  {\rho}_\theta(T-t) x(t) d t - y$}, then
the generalization gap is given by
\begin{equation*}
    R_{x}(\theta) -  {R}_n(\theta) =
    \mathbb{E}_{x} [g_{\theta}^2(x)]
    - \frac{g_{\theta}^2(x_1) + \ldots + g_{\theta}^2(x_n)}{n}.
\end{equation*}

Now let hypothesis space $\mathcal{F} = \{x \mapsto g_{\theta}^2(x) : \theta \in \Theta\}$, then its empirical Rademacher complexity is given by
\begin{align*}
    \mathcal{R}_S (\mathcal{F}) & = \mathbb{E}_{\sigma} \left[\sup_{\theta \in \Theta} \frac{1}{n} \sum_{i=1}^n \sigma_i g_{\theta}^2(x_i)\right] \\
    & = 
    \frac{1}{n}
    \mathbb{E}_{\sigma} \left[\sup_{\theta \in \Theta}
    \sum_{i=1}^n \sigma_i
    \left|\int_0^T  {\rho}_\theta(T-t) x_i(t) d t - y_i\right|^2\right] 
\end{align*}

By the Talagrand’s contraction Lemma \ref{lemma: contraction}, since $g_{\theta}^2(x_i)$ is $2 \sup_{\theta \in \Theta, i \in [1:n]} |g_{\theta}(x_i)|$ Lipschitz, we have 
\begin{align*}
    \mathcal{R}_S (\mathcal{F})
    & \leq  
    2 \sup_{\theta \in \Theta, i \in [1:n]} |g_\theta(x_i)| \cdot  
    \frac{1}{n}
    \mathbb{E}_{\sigma} \left[\sup_{\theta \in \Theta}
    \sum_{i=1}^n \sigma_i
    \left(\int_0^T  {\rho}_\theta(T-t) x_i(t) d t - y_i\right)\right]
    \\
    & = 
    \frac{2 \sup_{\theta \in \Theta, i \in [1:n]} |g_\theta(x_i)|}{n}
    \mathbb{E}_{\sigma} \left[\sup_{\theta \in \Theta}
    \int_0^T  {\rho}_\theta(T-t)  \sum_{i=1}^n \sigma_i x_i(t) d t\right]
\end{align*}

Now we separate the expectation into two parts: the unbiased part invovled with $x_i(t) - \mu(t)$ and the biased part $\mu(t)$, by noticing that 
\begin{align*}
    & \mathbb{E}_{\sigma} \left[\sup_{\theta \in \Theta}
    \int_0^T  {\rho}_\theta(T-t)  \sum_{i=1}^n \sigma_i x_i(t) d t\right] \\
    = &
    \mathbb{E}_{\sigma} \left[\sup_{\theta \in \Theta}
    \int_0^T  {\rho}_\theta(T-t)  \sum_{i=1}^n \sigma_i (x_i(t) - \mu(t)) d t + 
    \int_0^T  {\rho}_\theta(T-t)  \sum_{i=1}^n \sigma_i \mu(t)
    d t
    \right] \\
    \leq & 
    \mathbb{E}_{\sigma} \left[\sup_{\theta \in \Theta}
    \int_0^T  {\rho}_\theta(T-t)  \sum_{i=1}^n \sigma_i (x_i(t) - \mu(t)) d t\right] + 
    \mathbb{E}_{\sigma}
    \left[\sup_{\theta \in \Theta}
    \int_0^T  {\rho}_\theta(T-t)  \sum_{i=1}^n \sigma_i \mu(t)
    d t
    \right] 
\end{align*}

For the unbiased part, 
by the Hölder's inequality, for any $p, q \in [1, \infty]$ such that $\frac{1}{p} + \frac{1}{q} = 1$, 
\begin{equation}\label{eq: unbiased}
    \begin{aligned}
    & \mathbb{E}_\sigma \left[\sup_{\theta \in \Theta}
    \int_0^T  {\rho}_\theta(T-t)  \sum_{i=1}^n \sigma_i (x_i(t) - \mu(t)) d t\right] \\
    \leq &
    \sup_{\theta \in \Theta} \left(\int_0^T \left| {\rho}^p_\theta(T-t)\right|
    K^{p/2}(t,t)
    d t\right)^{1/p} \mathbb{E}_{\sigma} \left[\left(\int_0^T \left|\sum_{i=1}^n \sigma_i \frac{x_i(t) - \mu(t)}{\sqrt{K(t,t)}}\right|^q d t\right)^{1/q}\right]
    \end{aligned}
\end{equation}

For the biased part, 
\begin{equation}\label{eq: biased}
    \begin{aligned}
    \mathbb{E}_{\sigma}
    \left[\sup_{\theta \in \Theta}
    \int_0^T  {\rho}_\theta(T-t)  \sum_{i=1}^n \sigma_i \mu(t)
    d t
    \right] & \leq 
    \sup_{\theta \in \Theta} \left|\int_0^T  {\rho}_\theta(T-t) \mu(t)
    d t\right|
    \mathbb{E}_{\sigma}
    \left[\left|\sum_{i=1}^n \sigma_i\right|\right] \\
    & \leq 
    \sup_{\theta \in \Theta} \left|\int_0^T  {\rho}_\theta(T-t) \mu(t)
    d t\right|
    \sqrt{\mathbb{E}_{\sigma}
    \left[\left|\sum_{i=1}^n \sigma_i\right|^2\right]} \\
    & = 
    \sqrt{n}
    \sup_{\theta \in \Theta} \left|\int_0^T  {\rho}_\theta(T-t) \mu(t)
    d t\right|
    \end{aligned}
\end{equation}

Now for the unbiased part (\ref{eq: unbiased}), we take $p=1, q=\infty$.
Then we have 
\begin{equation}\label{eq: further unbiased}
    \begin{aligned}
        & \mathbb{E}_\sigma \left[\sup_{\theta \in \Theta}
    \int_0^T  {\rho}_\theta(T-t)  \sum_{i=1}^n \sigma_i (x_i(t) - \mu(t)) d t\right] \\
    \leq &
    \sup_{\theta \in \Theta} \left(\int_0^T \left| {\rho}_\theta(T-t)\right|
    \sqrt{K(t, t)}
    d t\right) 
    \mathbb{E}_{\sigma} \left[\sup_{t \in [0, T]} \left|\sum_{i=1}^n \sigma_i \frac{x_i(t) - \mu(t)}{\sqrt{K(t,t)}}\right|\right]
    \end{aligned}
\end{equation}

Also by the same argument, note that 
\begin{equation}\label{eq: sup g_theta}
    \begin{aligned}
        & \sup_{\theta \in \Theta, i \in [1:n]} |g_\theta(x_i)| \\
    = & \sup_{\theta \in \Theta, i \in [1:n]}  
    \left|\int_0^T  {\rho}_\theta(T-t) x_i(t) d t - y_i\right| \\
    \leq &
    \sup_{\theta \in \Theta, i \in [1:n]} 
    \left|\int_0^T  {\rho}_\theta(T-t) (x_i(t)-\mu(t)) d t\right| + 
    \sup_{\theta \in \Theta}
    \left|\int_0^T  {\rho}_\theta(T-t) \mu(t) d t\right| 
    +
    1 \\
    \leq &
    \sup_{\theta \in \Theta} 
    \left(\int_0^T \left| {\rho}_\theta(T-t)\right| \sqrt{K(t,t)} d t\right) 
    \sup_{i \in [1:n], t \in [0, T]} 
    \left|\frac{x_i(t)-\mu(t)}{\sqrt{K(t,t)}}\right|
    + 
    \sup_{\theta \in \Theta}
    \left|\int_0^T \Re( {\rho}_\theta(T-t)) \mu(t) d t\right| 
    +
    1
    \end{aligned}
\end{equation}

Thus, there are two terms that we need to bound:
\begin{equation}\label{eq: two terms that need to bound}
    \sup_{i \in [1:n], t \in [0, T]} 
    \left|\frac{x_i(t)-\mu(t)}{\sqrt{K(t,t)}}\right|, 
    \quad
    \mathbb{E}_{\sigma} \left[\sup_{t \in [0, T]} \left|\sum_{i=1}^n \sigma_i \frac{x_i(t)-\mu(t)}{\sqrt{K(t,t)}}\right|\right]
\end{equation}

For the first term, notice that the normalized Gaussian process $\frac{x_i(t)-\mu(t)}{\sqrt{K(t,t)}}$ is centered. 
By Assumption \ref{assumption: as finite}, it is Hölder continuous and $\sigma^2$-sub-Gaussian on $t \in [0, T]$.
Therefore, we can directly apply Lemma \ref{lemma: Hölder maximal} and get with probability at least $1-\delta/3n$,
\begin{equation*}
    \sup_{t\in[0,T]} \left|\frac{x_i(t)-\mu(t)}{\sqrt{K(t,t)}}\right| 
    \leq
    L + \sigma \sqrt{2 \log (6Tn/\delta)}, 
    \quad
    \forall i =1,\ldots,n
\end{equation*}
Now by taking a union bound over $i = 1,\ldots,n$, we get with probability at least $1-\delta/3$,
\begin{equation}\label{eq: sup of gp}
    \sup_{i \in [1:n], t \in [0, T]} 
    \left|\frac{x_i(t)-\mu(t)}{\sqrt{K(t,t)}}\right| 
    \leq 
    L + \sigma \sqrt{2 \log (6Tn/\delta)}.
\end{equation}
For the second term, we apply the $\varepsilon$-net and covering number argument as in Lemma \ref{lemma: Hölder maximal}.
We discretize the time interval $[0, T]$ into $N$ parts $[0, T/N] \cup [T/N, 2T/N] \cdots \cup [(N-1)T/N, T]$, then for any $t \in [0, T]$, there exists a sub-interval such that $t \in [(k-1)T/N, kT/N]$ for some $k \in [1:N]$.
Therefore, $\forall t \in [0, T]$ such that $t \in [(k-1)T/N, kT/N]$ for some $k \in [1:N]$, by Hölder continuity in Assumption \ref{assumption: as finite} for the normalized process, we have 
\begin{align*}
    \left|\sum_{i=1}^n \sigma_i \frac{x_i(t)-\mu(t)}{\sqrt{K(t,t)}}\right| & \leq 
    \left|\sum_{i=1}^n \sigma_i \frac{x_i\left(\frac{(k-1)T}{N}\right)-\mu\left(\frac{(k-1)T}{N}\right)}{\sqrt{K\left(\frac{(k-1)T}{N}, \frac{(k-1)T}{N}\right)}}\right| + 
    \left|\sum_{i=1}^n \sigma_i \left(\frac{x_i\left(\frac{(k-1)T}{N}\right)-\mu\left(\frac{(k-1)T}{N}\right)}{\sqrt{K\left(\frac{(k-1)T}{N}, \frac{(k-1)T}{N}\right)}} - 
    \frac{x_i(t)-\mu(t)}{\sqrt{K(t,t)}}\right)\right| \\
    & \leq 
    \max_{k=1,\ldots,N}
    \left|\sum_{i=1}^n \sigma_i \frac{x_i\left(\frac{(k-1)T}{N}\right)-\mu\left(\frac{(k-1)T}{N}\right)}{\sqrt{K\left(\frac{(k-1)T}{N}, \frac{(k-1)T}{N}\right)}}\right| +
    \|\sigma\| \sqrt{n} L \left(\frac{T}{N}\right)^{H} \\
    & = 
    \max_{k=1,\ldots,N}
    \left|\sum_{i=1}^n \sigma_i \frac{x_i\left(\frac{(k-1)T}{N}\right)-\mu\left(\frac{(k-1)T}{N}\right)}{\sqrt{K\left(\frac{(k-1)T}{N}, \frac{(k-1)T}{N}\right)}}\right| +
    nL \left(\frac{T}{N}\right)^{H}
\end{align*}
Then by the Massart Lemma \ref{lemma: massart} and the sup norm bound (\ref{eq: sup of gp}), with probability at least $1-\delta/3$,
\begin{align*}
    \mathbb{E}_{\sigma} \left[\sup_{t \in [0, T]} \left|\sum_{i=1}^n \sigma_i \frac{x_i(t)-\mu(t)}{\sqrt{K(t,t)}}\right|\right] & \leq 
    \mathbb{E}_{\sigma}
    \left[\max_{k=1,\ldots,N}
    \left|\sum_{i=1}^n \sigma_i \frac{x_i\left(\frac{(k-1)T}{N}\right)-\mu\left(\frac{(k-1)T}{N}\right)}{\sqrt{K\left(\frac{(k-1)T}{N}, \frac{(k-1)T}{N}\right)}}\right|
    \right] +
    nL \left(\frac{T}{N}\right)^{H} \\
    & \leq 
    \sqrt{2 n \log N} \cdot 
    \sup_{i \in [1:n], t \in [0, T]}
    \left|\frac{x_i(t)-\mu(t)}{\sqrt{K(t,t)}}\right|+
    nL \left(\frac{T}{N}\right)^{H} \\
    & \leq 
    \sqrt{2 n \log N}
    \left(L + \sigma \sqrt{2 \log (6Tn/\delta)}\right)
    +
    nL \left(\frac{T}{N}\right)^{H}
\end{align*}
Since $N$ is an arbitrary integer number, we let $N = \left[T n^{1/H}\right] + 1$, then we get 
\begin{equation}\label{eq: rademacher process}
    \begin{aligned}
        \mathbb{E}_{\sigma} \left[\sup_{t \in [0, T]} \left|\sum_{i=1}^n \sigma_i \frac{x_i(t)-\mu(t)}{\sqrt{K(t,t)}}\right|\right] & \leq 
        {\mathcal{O}} 
        \left(
        \sqrt{n \cdot \log N \cdot \log(Tn/\delta)}
        \right) \\
        & \leq 
        \mathcal{O} 
        \left(
        \sqrt{n} \log (N T n/\delta)
        \right) \\
        & = \mathcal{O} 
        \left(\sqrt{n} \log (Tn /  \delta)\right).
    \end{aligned}
\end{equation}
Combining (\ref{eq: sup of gp}), (\ref{eq: rademacher process}), (\ref{eq: biased}) and (\ref{eq: further unbiased}), we can further bound (\ref{eq: sup g_theta}) as
\begin{equation}\label{eq: further g_theta sup}
    \sup_{\theta \in \Theta, i \in [1:n]} |g_\theta(x_i)| \leq 
    \sup_{\theta \in \Theta} 
    \left(\int_0^T \left| {\rho}_\theta(T-t)\right| \sqrt{K(t,t)} d t\right) 
    \mathcal{O} \left(\sqrt{\log(Tn/\delta)}\right)
    + 
    \sup_{\theta \in \Theta}
    \left|\int_0^T \Re( {\rho}_\theta(T-t)) \mu(t) d t\right| 
    +
    1
\end{equation}
And the Rademacher complexity is further bounded as 
\begin{align*}
    & \mathcal{R}_S (\mathcal{F}) \\
    \leq &  
    \frac{2 \sup_{\theta \in \Theta, i \in [1:n]} |g_\theta(x_i)|}{n}
    \mathbb{E}_{\sigma} \left[\sup_{\theta \in \Theta}
    \int_0^T  {\rho}_\theta(T-t)  \sum_{i=1}^n \sigma_i x_i(t) d t\right] \\
    \leq & 
    \frac{2 \sup_{\theta \in \Theta, i \in [1:n]} |g_\theta(x_i)|}{n}
    \left(\sup_{\theta \in \Theta} \int_0^T \left| {\rho}_\theta(T-t)\right|
    \sqrt{K(t, t)}
    d t 
    + 
    \sup_{\theta \in \Theta} \left|\int_0^T  {\rho}_\theta(T-t) \mu(t)
    d t\right|\right)
    \cdot
    \mathcal{O} 
    \left(\sqrt{n} \log (Tn /  \delta)\right) \\
    \leq & 
    \left(\sup_{\theta \in \Theta} \int_0^T \left| {\rho}_\theta(T-t)\right|
    \sqrt{K(t, t)}
    d t 
    + 
    \sup_{\theta \in \Theta} \left|\int_0^T  {\rho}_\theta(T-t) \mu(t)
    d t\right|+1\right)^2
    \cdot 
    \mathcal{O} \left(\frac{\log^{3/2}(Tn/\delta)}{\sqrt{n}}\right).
\end{align*}
Finally, by the symmetrization of $R_x(\theta) -  {R}_n(\theta)$,  combining it with (\ref{eq: further g_theta sup}) and (\ref{mohri}), we have with probability at least $1-\delta$, 
\begin{equation*}
    \sup_{\theta \in \Theta} \left|R_x(\theta) -  {R}_n(\theta)\right| \leq 
    \left(\sup_{\theta \in \Theta} \int_0^T \left| {\rho}_\theta(T-t)\right|
    \sqrt{K(t, t)}
    d t 
    + 
    \sup_{\theta \in \Theta} \left|\int_0^T  {\rho}_\theta(T-t) \mu(t)
    d t\right|+1\right)^2
    \cdot 
    \mathcal{O} \left(\frac{\log^{3/2}(Tn/\delta)}{\sqrt{n}}\right).
\end{equation*}
\end{proof}

\section{Proof for Proposition \ref{prop: initialization}}
\label{appendix: proof of proposition 1}

\begin{proof}
First, notice that by the Hölder's inequality with $p=1, q=\infty$, we have 
\begin{align*}
    & \mathbb{E}_x \left[\left|\int_0^T  {\rho}_{\Tilde{\theta}}(T-t) x(t) d t\right|\right] \\
    = & \frac{\mathbb{E}_x
    \left[\left|\int_0^T  {\rho}_\theta(T-t) x(t) d t\right|\right]}{\int_0^T \left| {\rho}_\theta(T-t)\right| \sqrt{K(t, t)}d t + \left|\int_0^T  {\rho}_\theta(T-t) \mu(t)d t\right|} \\
    \leq & 
    \frac{\mathbb{E}_x
    \left[\left|\int_0^T  {\rho}_\theta(T-t) (x(t) - \mu(t)) d t\right|\right] + \left|\int_0^T  {\rho}_\theta(T-t)\mu(t)dt\right|}{\int_0^T \left| {\rho}_\theta(T-t)\right| \sqrt{K(t, t)}d t + \left|\int_0^T  {\rho}_\theta(T-t) \mu(t)d t\right|}
    \\
    \leq & 
    \frac{\int_0^T | {\rho}_\theta(T-t)| \sqrt{K(t,t)}dt
    \cdot 
    \mathbb{E}_x \left[\sup_{t \in [0, T]} \left|\frac{x(t)-\mu(t)}{\sqrt{K(t,t)}}\right|\right] + 
    \left|\int_0^T  {\rho}_\theta(T-t)\mu(t)dt\right|}{\int_0^T \left| {\rho}_\theta(T-t)\right| \sqrt{K(t, t)}d t + \left|\int_0^T  {\rho}_\theta(T-t) \mu(t)d t\right|}
    \\
    \leq &  
    \mathbb{E}_x \left[\sup_{t \in [0, T]} \left|\frac{x(t)-\mu(t)}{\sqrt{K(t,t)}}\right|\right] + 1
\end{align*}

We let $X_t := \frac{x(t)-\mu(t)}{\sqrt{K(t,t)}}$, then by Assumption \ref{assumption: as finite}, $X_t$ is
Hölder continuous and $\sigma^2$ sub-Gaussian for any $t \in [0, T]$.
Again, we use an $\varepsilon$-net argument to bound $\mathbb{E} \left[\sup_{t \in [0, T]} |X_t|\right]$.
By separating the time interval $[0, T]$ into $N$ parts $[0, T/N] \cup [T/N, 2T/N] \cdots \cup [(N-1)T/N, T]$.
Then for any time $t \in [0, T]$, there exists a $\pi(t) \in \{0, T/N, \ldots, (N-1)T/N\}$ such that $|t - \pi(t)| \leq T/N$.
Therefore, by Hölder continuity,
\begin{align*}
    \mathbb{E} \left[\sup_{t \in [0, T]} |X_t|\right] 
    & \leq 
    \mathbb{E} \left[\sup_{t \in [0, T]} |X_t - X_{\pi(t)}|\right] + \mathbb{E}\left[\sup_{t \in [0, T]} |X_{\pi(t)}|\right] \\
    & \leq 
    L \left(\frac{T}{N}\right)^H +
    \mathbb{E} \left[\max_{i \in [0:N-1]} 
    |X_{iT/N}|\right].
\end{align*}
For the maximum over a finite class,
notice that for any $u_0 > 0$,
\begin{align*}
    \mathbb{E} \left[\max_{i \in [0:N-1]} 
    |X_{iT/N}|\right] & = 
    \int_0^\infty
    P \left(\max_{i \in [0:N-1]} 
    |X_{iT/N}| \geq u\right)du\\
    & =
    \int_0^{u_0}
    P \left(\max_{i \in [0:N-1]} 
    |X_{iT/N}| \geq u\right)du + 
    \int_{u_0}^\infty
    P \left(\max_{i \in [0:N-1]} 
    |X_{iT/N}| \geq u\right)du \\
    & \leq u_0 + \int_{u_0}^\infty
    \sum_{i=0}^{N-1} 
    P \left(|X_{iT/N}| \geq u\right)du.
\end{align*}
Since $X_{iT/N}$ is $\sigma^2$ sub-Gaussian for every $i \in [0:N-1]$, then $\forall u_0 >0$, 
\begin{align*}
    \mathbb{E} \left[\max_{i \in [0:N-1]} 
    |X_{iT/N}|\right] & \leq u_0 + 2N \int_{u_0}^\infty
    \exp \left(-\frac{u^2}{2\sigma^2}\right)du \\
    & \leq  u_0 + 2N
    \int_{u_0}^\infty \frac{u}{u_0}
    \exp \left(-\frac{u^2}{2\sigma^2}\right)du \\
    & = u_0 + \frac{2N \sigma^2}{u_0} \exp \left(-\frac{u_0^2}{2\sigma^2}\right).
\end{align*}
Minimizing the above term over $u_0>0$, we can simply let $u_0 = \sigma \sqrt{2\log2N}$, then 
\begin{equation*}
    \mathbb{E} \left[\max_{i \in [0:N-1]} 
    |X_{iT/N}|\right]  \leq 
    \sigma \sqrt{2\log2N}
    + \frac{\sigma}{\sqrt{2\log2N}} \leq 2\sigma \sqrt{2\log2N}.
\end{equation*}
Now back to the original upper bound, we get 
\begin{equation*}
    \mathbb{E} \left[\sup_{t \in [0, T]} |X_t|\right] 
    \leq 
    L \left(\frac{T}{N}\right)^H +
    2\sigma \sqrt{2\log2N}.
\end{equation*}
Since $N$ is an arbitrary positive integer, we simply take $N = [T]+1$, finally we get 
\begin{equation*}
    \mathbb{E} \left[\sup_{t \in [0, T]} |X_t|\right] 
    \leq 
    L + 2 \sigma \sqrt{2 \log (2T+2)} = \mathcal{O} (\sqrt{\log T}).
\end{equation*}
\end{proof}

\section{Lipschitz function of sub-Gaussian random variables}
\label{appendix: lip for subgaussian}
In this section, we provide some known examples for the sub-Gaussian random variables that remain the sub-Gaussian property under a Lipschitz function.
\begin{enumerate}
    \item For a bounded random variable $X \in [0, 1]$, if $f : [0, 1] \xrightarrow{} \mathbb{R}$ is a quasi-convex function, i.e., $\{x : f(x) \leq s\}$ is a convex set for all $s \in \mathbb{R}$.
    If $f$ is further Lipschitz in $[0, 1]$, then $f(X)$ is sub-Gaussian.
    See Theorem 7.12 in \citet{boucheron2013concentration}.
    \item 
    For a sub-Gaussian random variable $X$ that has density of the form $e^{-U(x)}$ with $U$ being twice continuously differentiable and $U''(x) > 0, \forall x \in \mathbb{R}$, then if $f$ is a Lipschitz function, $f(X)$ is also sub-Gaussian. 
    See Theorem 5.2.15 in \citet{vershynin2020high}.
\end{enumerate}

\end{document}